\documentclass[12pt]{article}
%% preamble
% page setup
\setlength{\oddsidemargin}{.125in}
\setlength{\evensidemargin}{.125in}
\setlength{\textwidth}{6.25in}
\pdfoptionpdfminorversion=7
% packages
\usepackage{hyperref} 
\hypersetup{colorlinks=true,citecolor=blue,urlcolor=blue}
\usepackage{xcolor,bbm, amsfonts, amsmath,amsthm, amssymb, graphicx, natbib}
\usepackage[ruled]{algorithm2e}

% theorems
\theoremstyle{plain}
  \newtheorem{theorem}{Theorem}
  \newtheorem{lemma}[theorem]{Lemma}
  \newtheorem{proposition}[theorem]{Proposition}
  
  \newtheorem*{theorem*}{Theorem (Informal)}
\theoremstyle{definition}
  \newtheorem{definition}{Definition}

  \newtheorem{assumption*}{Assumption (A1)}
% math symbols
\newcommand{\cH}{\mathcal{H}}
\newcommand{\cQ}{\mathcal{Q}}
\newcommand{\cN}{\mathcal{N}}
\newcommand{\R}{\mathbf{R}}

% editing mode

\definecolor{grey}{gray}{0.8}

% title and author
\title{Average-case hardness of RIP certification}
\author{Tengyao Wang\footnote{University of Cambridge, supported by Benefactors' Scholarship, St John's College.}, Quentin Berthet\footnote{University of Cambridge, supported by the Isaac Newton Trust Early Career Support Scheme} and Yaniv Plan\footnote{University of British Columbia, supported by NSERC grant 22R23068}}

\begin{document}
 \maketitle
 \begin{abstract}
The restricted isometry property (RIP) for design matrices gives guarantees for optimal recovery in sparse linear models.  It is of high interest in compressed sensing and statistical learning. This property is particularly important for computationally efficient recovery methods. As a consequence, even though it is in general NP-hard to check that RIP holds, there have been substantial efforts to find tractable proxies for it.  These would allow the construction of RIP matrices and the polynomial-time verification of RIP given an arbitrary matrix. We consider the framework of average-case certifiers, that never wrongly declare that a matrix is RIP, while being often correct for random instances. While there are such functions which are tractable in a suboptimal parameter regime, we show that this is a computationally hard task in any better regime.  Our results are based on a new, weaker assumption on the problem of detecting dense subgraphs.
\end{abstract}

\section*{Introduction}

In many areas of data science, high-dimensional signals contain rich structure.  It is of great interest to leverage this structure to improve our ability to describe characteristics of the signal and to make future predictions.  Sparsity is a structure of wide applicability \citep[see, e.g.][]{mallat1999wavelet, CSbook, eldar2012compressed}, with a broad literature dedicated to its study in various~scientific~fields.

The sparse linear model takes the form $y = X \beta + \varepsilon$, where $y \in \R^n$ is a vector of observations, $X \in \R^{n \times p} $ is a \textit{design matrix}, $\varepsilon \in \R^n$ is noise, and the vector $\beta\in\R^p$ is assumed to have a small number $k$ of non-zero entries. Estimating $\beta$ or the \textit{mean response}, $X \beta$, are among the most widely studied problems in signal processing, as well as in statistical learning.  
In high-dimensional problems, one would wish to recover $\beta$ with as few observations as possible. For an incoherent design matrix, it is known that an order of $k^2$ observations suffice \citep{donoho2006stable, donoho2003optimally}.  However, this appears to require a number of observations far exceeding the information content of $\beta$, which has only $k$ variables, albeit with unknown locations.

This dependence in $k$ can be greatly improved by using design matrices that are almost isometries on some low dimensional subspaces, i.e., matrices that satisfy the {\em restricted isometry property} with parameters $k$ and $\theta$, or RIP$(k,\theta)$ (see Definition~\ref{DEF:RIP}). 
It is a highly robust property, and in fact implies that many different polynomial time methods, such as greedy methods \citep{blumensath2009iterative, needell2009cosamp, dai2009subspace} and convex optimization \citep{candes2008restricted, candes2006stable, CanTao04, candes2006near}, are stable in recovering $\beta$.  Random matrices are known to satisfy the RIP when the number $n$ of observation is more than about $k \log(p)/\theta^2$. These results were developed in the field of \textit{compressed sensing} \citep{candes2006robust, donoho2006compressed, candes2006near, CSbook, eldar2012compressed} where the use of randomness still remains pivotal for near-optimal results. Properties related to the conditioning of design matrices have also been shown to play a key role in the statistical properties of computationally efficient estimators of $\beta$ \citep{ZhaWaiJor14}. While the assumption of randomness allows great theoretical leaps, it leaves open questions for practitioners.

Scientists working on data closely following this model cannot always choose their design matrix $X$, or at least choose one that is completely random. Moreover, it is in general practically impossible to check that a given matrix satisfies these desired properties, as RIP certification is NP-hard \citep{BanDob12}. Having access to a function, or statistic, of $X$ that could be easily computed, which determines how well $\beta$ may be estimated, would therefore be of a great help.

The search for such statistics has been of great importance for over a decade now, and several have been proposed \citep{d2011testing, lee2008computing, juditsky2011verifiable, dAsBacGha08}.  Perhaps the simplest and most popular is the \textit{incoherence parameter}, which measures the maximum inner product between distinct, normalized, columns of $X$. 
However, all of these are known to necessarily fail to guarantee good recovery when $p \geq 2 n$ unless $n$ is of order $k^2$ \citep{d2011testing}. Given a specific problem instance, the strong recovery guarantees of compressed sensing cannot be verified based on these statistics. 

In this article, we study the problem of {\em average-case certification} of the Restricted Isometry Property (RIP). A certifier takes as input a design matrix $X$, always outputs `false' when $X$ does not satisfy the property, and outputs `true' for a large proportion of matrices (see Definition~\ref{DEF:certif}). Indeed, worst-case hardness does not preclude a problem from being solvable for most instances. The link between restricted isometry and incoherence implies that polynomial time certifiers exists in a regime where $n$ is of order $k^2 \log(p)/\theta^2$. It is natural to ask whether the RIP can be certified for sample size $n\gg k\log(p)/\theta^2$, where most matrices (with respect to, say, the Gaussian measure) are RIP. If it does, it would also provide a Las Vegas algorithm to construct RIP design matrices of optimal sizes. This should be compared with the currently existing limitations for the deterministic construction of RIP matrices.

Our main result is that certification in this sense is hard even in a near-optimal regime, assuming a new, weaker assumption on detecting dense subgraphs, related to the \textit{Planted Clique} hypothesis.  

\begin{theorem*}
 There is no computationally efficient average-case certifier for $\textrm{RIP}_{n,p}(k,\theta)$ uniformly over an asymptotic regime where $n\ll k^{1+\alpha}/\theta^2$, for any $\alpha<1$.

\end{theorem*}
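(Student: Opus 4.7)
The approach is a computational reduction from the detection version of the planted dense subgraph (PDS) problem to the RIP certification problem. Assume for contradiction that a polynomial-time average-case certifier $\Psi$ for $\mathrm{RIP}_{n,p}(k,\theta)$ exists in some regime with $n \ll k^{1+\alpha}/\theta^2$. The plan is to construct an efficient randomized map $G \mapsto X(G)$ from graphs on $p$ vertices to $n \times p$ matrices such that, when $G$ is drawn from the Erd\H{o}s--R\'enyi ``null'' distribution, $X(G)$ is distributed (approximately) like a Gaussian design matrix; and when $G$ is drawn from the ``planted'' distribution with a dense subgraph on some unknown vertex set $S$ of size $k$, the matrix $X(G)$ deterministically fails $\mathrm{RIP}(k,\theta)$. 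Feeding $X(G)$ into $\Psi$ then yields a nontrivial distinguisher for PDS: on null inputs $\Psi$ outputs \textbf{true} with non-negligible probability (because Gaussian matrices are RIP in our regime and $\Psi$ is an average-case certifier), while on planted inputs $\Psi$ must output \textbf{false} (because a certifier never erroneously declares RIP). This contradicts the assumed hardness of PDS.

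The main construction step is the map $X(G)$. The natural route is to identify the columns of $X$ with the vertices of $G$ and encode edge information as a (small) correlation between the corresponding columns, drowned in Gaussian noise. Concretely, one can generate $X$ so that its entries are a centered and rescaled version of the adjacency matrix of $G$ (padded or combined with an independent Gaussian matrix to achieve $n$ rows). Under the null, the independence of edges makes the entries of $X$ close in total variation to i.i.d. Gaussians, and standard compressed sensing bounds give $\mathrm{RIP}(k,\theta)$ with high probability in the target regime. Under the planted alternative, the $k$ columns indexed by $S$ all have pairwise inner products biased upward by a calibrated amount $\mu$; choosing $v = \mathbf{1}_S / \sqrt{k}$, the cross terms contribute $\|Xv\|^2 - 1 \asymp k\mu$, and by tuning $\mu$ this exceeds $\theta$, violating RIP by a demonstrable witness supported on $S$.

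The remaining step is parameter bookkeeping. The PDS hypothesis is believed to hold when the planted subgraph is sublinear in a specific range: if $p$ vertices support a planted set of size $k$ with density bias shrinking polynomially in $p$, then no polynomial-time test separates the planted from the null distribution. The target is to match the PDS regime (where detection is conjecturally hard) precisely to the regime $n \ll k^{1+\alpha}/\theta^2$ for every $\alpha < 1$. This requires choosing $p$ as a suitable polynomial in $k$ and $\theta$ (so that $\log p$ is absorbed), and selecting the density bias $\mu$ so that on one hand the planted structure is large enough to break RIP, and on the other hand it remains below the conjectured computational threshold for PDS detection.

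The main obstacle is this calibration: one must make the PDS assumption as weak as possible (so that the final hardness statement is credible) while still strong enough to cover the entire range $\alpha < 1$. The standard Planted Clique hypothesis corresponds to a specific edge density and is, by itself, not enough; the statement refers to a ``new, weaker assumption,'' indicating that the hypothesis is phrased for general planted densities $q > 1/2$ rather than cliques, and that the reduction must be robust to small perturbations (arising from the symmetrization and Gaussian padding) so that the certifier cannot exploit structural artifacts of the map $X(\cdot)$ to distinguish null from planted by routes other than RIP itself. Verifying that $X(G)$ on null inputs is genuinely in the ``average-case'' support required by the definition of the certifier, rather than on some measure-zero adversarial slice, is the delicate point and will require either a total-variation argument between $X(G)$ under the null and a true Gaussian ensemble, or a direct argument that the certifier's success probability descends to the induced measure.
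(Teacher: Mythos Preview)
Your overall framework is the paper's: reduce from planted dense subgraph detection by mapping graphs to design matrices, so that under the null the matrix looks like a random design (where the certifier must often say \textbf{true}), while under the planted alternative the matrix provably fails RIP (so the certifier must say \textbf{false}). Where you diverge is in the concrete construction, and the divergence matters.

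The paper does \emph{not} identify columns with vertices and encode edges as pairwise column correlations. Instead it draws two random vertex subsets $U=\{u_1,\dots,u_N\}$ and $W=\{w_1,\dots,w_N\}$, forms the bipartite sign matrix $A_{ij}=2\mathbbm{1}\{u_i\sim w_j\}-1$, and then sets the $(i,j)$ entry of the design to an independent draw from $\tilde Q^+$ if $A_{ij}=1$ and from $\tilde Q^-$ if $A_{ij}=-1$, where $\tilde Q^{\pm}$ are the two halves of $\tilde Q$ split at its median. Under the Erd\H os--R\'enyi null the $A_{ij}$ are i.i.d.\ Rademacher, so each entry is \emph{exactly} $\tilde Q$-distributed: no total-variation argument is needed, and the reduction works for every $Q\in\mathcal Q$, not just Gaussians. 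Under the planted alternative, the bias $\mathbb E Y^+_{ij}=-\mathbb E Y^-_{ij}=c_1/\sqrt n$ on the rows/columns hitting the hidden set $K$ is what breaks RIP; the witness vector is supported on the columns with the largest signed edge counts into $U\cap K$, not simply $\mathbf 1_S/\sqrt k$. A further block-averaging step (partitioning $U$ and $W$ into $\ell$ pieces and averaging $\ell^2$ sub-blocks) extends the range of admissible $k$ beyond $k\ll n^{1/2}$.

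The step you flag as ``delicate'' is in fact a genuine gap in your route. Your proposed map---a centred/rescaled adjacency matrix padded with Gaussian rows---produces entries that are rescaled Rademacher, which are at $\Omega(1)$ total-variation distance from any continuous $\tilde Q$ entrywise; the TV distance over $n\times p$ entries is then hopeless. The certifier's $2/3$ guarantee is only promised under the exact measure $\tilde Q^{\otimes(n\times p)}$, so an approximate match does not transfer. The paper's median-split device is precisely the missing idea that converts your plan into a proof: it manufactures exact distributional equality under the null while still injecting a detectable mean shift under the planted alternative. Also note that the paper's ``weaker assumption'' is not merely planted random graphs with edge probability $q>1/2$: the alternative allows \emph{any} fixed graph $H$ on $\kappa$ vertices with edge count at least $(1/2+\epsilon)\binom{\kappa}{2}$, which is why the planted-side analysis (Lemma~\ref{Lemma:Events} on $N_{U,W;K}$ and the edge-count argument) is more involved than a correlation calculation for a clique.
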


This suggests that even in the average case, RIP certification requires almost $k^2\log(p)/\theta^2$ observations.  This contrasts highly with the fact that a random matrix satisfies RIP with high probability when $n$ exceeds about $k \log(p)/\theta^2$.  Thus, there appears to be a large gap between what a practitioner may be able to certify given a specific problem instance, and what holds for a random matrix.
On the other hand, if a certifier is found which fills this gap, the result would not only have huge practical implications in compressed sensing and statistical learning, but would also disprove a long-standing conjecture from computational complexity theory.      
   
  Our result shares many characteristics with a hypothesis by \cite{Fei02} on the hardness of refuting random satisfiability formulas. Indeed, our statement is also about the hardness of verifying that a property holds for a particular instance (RIP for design matrices, instead of unsatisfiability for boolean formulas). It concerns a regime where such a property should hold with high probability ($n$ of order $k^{1+\alpha}/\theta^2$, linear regime for satisfiability), cautiously allowing only one type of errors, false negatives, for a problem that is hard in the worst case. In these two examples, such certifiers exist in an a sub-optimal regime. Our problem is conceptually different from results regarding the worst-case hardness of certifying this property \citep[see, e.g.][]{BanDob12,KoiZou12,TilPfe14}. It is closer to another line of work concerned with computational lower bounds for statistical learning problems based on average-case assumptions. The planted clique assumption has been used to prove computational hardness results for statistical problems such as estimation and testing of sparse principal components \citep{BerRig13a,BerRig13b,WanBerSam16}, testing and localization of submatrix signals \citep{MaWu13, CheXu14}, community detection \citep{HajWuXu15} and sparse canonical correlation analysis \citep{GaoMaZho14}. The intractability of noisy parity recovery problem \citep{BluKalWas03} has also been used recently as an average-case assumption to deduce computational hardness of detection of satisfiability formulas with lightly planted solutions \citep{BerEll15}. Additionally, several unconditional computational hardness results are shown for statistical problems under constraints of learning models \citep{FelGriRey13, FelPerVem13}. The present work has two main differences compared to previous computational lower bound results. First, in a detection setting, these lower bounds concern two specific distributions (for the null and alternative hypothesis), while ours is valid for all sub-Gaussian distributions, and there is no alternative distribution. Secondly, our result is not based on the usual assumption for the Planted Clique problem. Instead, we use a weaker assumption on a problem of detecting planted dense graphs. This does not mean that the planted graph is a random graph with edge probability $q>1/2$ as considered in \citep{AriVer13,BhaChaChl10,AwaChaLai15}, but that it can be {\em any graph} with an unexpectedly high number of edges (see section~\ref{SEC:weak}). This choice is made to strengthen our result: it would `survive' the discovery of an algorithm that would use very specific properties of cliques (or even of random dense graphs) to detect their presence. As a consequence, the analysis of our reduction is more technically complicated.
  
  Our work is organized in the following manner: We recall in Section~\ref{SEC:RIP} the definition of the restricted isometry property, and some of its known properties. In Section~\ref{SEC:certif}, we define the notion of certifier, and prove the existence of a computationally efficient certifier in a sub-optimal regime. Our main result is developed in Section~\ref{SEC:hard}, focused on the hardness of average-case certification. The proofs of the main results are in Appendix~\ref{SEC:Proofs} and those of auxiliary results in Appendix~\ref{SEC:appendix}.

\section{Restricted Isometric Property}
\label{SEC:RIP}

\subsection{Formulation}

We use the definition of \cite{CanTao04}, who introduced this notion.  Below, for a vector $u \in \R^p$, $\|u\|_0$ is the number of non-zero entries.

\begin{definition}[RIP]
\label{DEF:RIP}
A matrix $X\in\mathbb{R}^{n\times p}$ satisfies the \emph{restricted isometry property} with sparsity $k\in\{1,\ldots,p\}$ and distortion $\theta\in(0,1)$, denoted by $X\in \mathrm{RIP}_{n,p}(k,\theta)$, if it holds that
\[
1-\theta\leq \|Xu\|_2^2\leq 1+\theta,
\]
for every $u\in\mathbb{S}^{p}(k): = \{u\in\mathbb{R}^p : \|u\|_2 = 1, \|u\|_0\leq k\}$.
\end{definition}
This can be equivalently defined by a property on submatrices of the design matrix: $X$ is in $\mathrm{RIP}_{n,p}(k,\theta)$ if and only if for any set $S$ of $k$ columns of $X$, the submatrix formed by taking any these columns is almost an isometry, i.e. if the spectrum of its Gram matrix is contained in the interval $[1-\theta,1+\theta]$:
$$
\|X^\top_{S} X_{S} - I_k\|_{\mathrm{op}} \leq \theta \, .
$$
Denote by $\|\cdot\|_{\mathrm{op},k}$ the \emph{$k$-sparse operator norm}, defined for a matrix $A$ as $\|A\|_{\mathrm{op},k} = \sup_{x\in \mathbb{S}^p(k)} \|Ax\|_2$. This yields another equivalent formulation of the RIP property: $X\in\mathrm{RIP}_{n,p}(k,\theta)$ if and only if
$$
\|X^\top X - I_p\|_{\mathrm{op},k} \leq \theta\, .
$$

We assume in the following discussion that the distortion parameter $\theta$ is upper-bounded by $1$. For $v\in\mathbb{R}^p$ and $T\subseteq\{1,\ldots,p\}$, we write $v_T$ for the $\#T$-dimensional vector obtained by restricting $v$ to coordinates indexed by $T$. Similarly, for an $n\times p$ matrix $A$ and subsets $S\subseteq\{1,\ldots,n\}$ and $T\subseteq\{1,\ldots,p\}$, we write $A_{S*}$ for the submatrix obtained by restricting $A$ to rows indexed by $S$, $A_{*T}$ for the submatrix obtained by restricting $A$ to columns indexed by $T$.

\subsection{Generation via Random Design}

Matrices that satisfy the restricted isometry property have many interesting applications in high-dimensional statistics and compressed sensing. However, there is no known way to generate them deterministically in general, and it is even NP-hard to check whether a given matrix $X$ belongs to $\mathrm{RIP}_{n,p}(k,\theta)$ \citep[see, e.g][]{BanDob12}. Several deterministic constructions of RIP matrices exist for sparsity level $k\lesssim \theta \sqrt{n}$. For example, using equitriangular tight frames and Gershgorin's circle theorem, one can construct RIP matrices with sparsity $k \leq \sqrt{n}$ and distortion $\theta$ bounded away from 0 \citep[see, e.g.][]{BanDob12}. The limitation $k\leq \theta \sqrt{n}$ is known as the `square root bottleneck'. To date, the only constructions that break the `square root bottleneck' are due to \citet{BouDilFor11} and \citet{BanMixMor14}, both of which give RIP guarantee for $k$ of order $n^{1/2+\epsilon}$ for some small $\epsilon>0$ and fixed $\theta$ (the latter construction is conditional on a number-theoretic conjecture being true).

Interestingly though, it is easy to generate large matrices satisfying the restricted isometry property through random design, and compared to the fixed design matrices mentioned in the previous paragraph, these random design constructions are much less restrictive on the sparsity level, typically allowing $k$ up to the order $n/\log(p)$ (assuming $\theta$ is bounded away from zero). They can be constructed easily from any centred sub-Gaussian distribution. We recall that a distribution (and its associated random variable) is said to be sub-Gaussian with parameter $\sigma$ if $\int_\mathbb{R} e^{\lambda x}\,dQ(x)\leq e^{\lambda^2\sigma^2/2}$ for all $\lambda\in\mathbb{R}$. 
\begin{definition}
Define $\cQ = \cQ_\sigma$ to be the set of sub-Gaussian distributions $Q$ over $\mathbb{R}$ with zero mean, unit variance, and sub-Gaussian parameter at most $\sigma$.
\end{definition}

The most common choice for a $Q\in\cQ$ is the standard normal distribution $\cN(0,1)$. Note that by Taylor expansion, for any $Q\in\cQ$, we necessarily have $\sigma^2\geq \int_\mathbb{R} x^2\,dQ(x) = 1$. In the rest of the paper, we treat $\sigma$ as fixed. Define the normalized distribution $\tilde{Q}$ to be the distribution of $Z/\sqrt{n}$ for $Z\sim Q$. The following well-known result states that by concentration of measure, random matrices generated with distribution $\tilde{Q}^{\otimes(n\times p)}$ satisfy restricted isometries (see, e.g. \citet{CanTao04} and \citet{Baretal08}). For completeness, we include a proof that establishes these particular constants stated here. All proofs are deferred to Appendix~\ref{SEC:Proofs} or Appendix~\ref{SEC:appendix}.

\begin{proposition}
\label{Prop:FiniteSampleBound}
Suppose $X$ is a random matrix with distribution $\tilde{Q}^{\otimes(n\times p)}$, where $Q \in \cQ$. It holds that
\begin{equation}
\mathbb{P}\bigl(X\in \mathrm{RIP}_{n,p}(k,\theta)\bigr)\geq 1-2\exp\biggl\{k\log \biggl(\frac{9ep}{k}\biggr) - \frac{n\theta^2}{256\sigma^4}\biggr\}.
\label{Eq:randomRIP}
\end{equation}
\end{proposition}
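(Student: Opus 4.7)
The plan is to reduce to a union bound over fixed $k$-subsets of columns, then within each subset reduce the operator norm to a supremum over a finite net of unit vectors, and finally apply a sub-exponential concentration inequality to the resulting quadratic form. This is the standard route for RIP-type statements for sub-Gaussian random matrices; the care needed is in tracking the constants so as to obtain exactly $9e$ inside the log and $256\sigma^4$ in the denominator of the exponent.

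First, I would use the submatrix reformulation of RIP given just before the proposition: $X \notin \mathrm{RIP}_{n,p}(k,\theta)$ iff there exists $S \subseteq \{1,\dots,p\}$ with $|S|=k$ such that $\|X_{*S}^\top X_{*S} - I_k\|_{\mathrm{op}} > \theta$. Fix such an $S$. Since $X_{*S}^\top X_{*S} - I_k$ is symmetric, $\|X_{*S}^\top X_{*S} - I_k\|_{\mathrm{op}} = \sup_{u \in \mathbb{S}^{k-1}} |\,\|X_{*S} u\|_2^2 - 1\,|$. Let $\mathcal{N}$ be a $(1/4)$-net of $\mathbb{S}^{k-1}$ in Euclidean distance; by a standard volumetric bound one may take $|\mathcal{N}| \le 9^k$. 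A routine linearization argument then gives $\sup_{u \in \mathbb{S}^{k-1}} |u^\top A u| \le 2 \sup_{v \in \mathcal{N}} |v^\top A v|$ for any symmetric $A$, so it suffices to control $|\|X_{*S} v\|_2^2 - 1|$ by $\theta/2$ uniformly over $v \in \mathcal{N}$.

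Next, for fixed $v \in \mathbb{S}^{k-1}$ with support in $S$, write $Y_i := \sqrt{n}\,(X_{*S} v)_i = \sum_{j \in S} Z_{ij}v_j$, where $Z_{ij}$ are i.i.d.\ with distribution $Q \in \cQ_\sigma$. By the linear stability of sub-Gaussianity, each $Y_i$ is mean zero, has variance $\|v\|_2^2 = 1$, and sub-Gaussian parameter at most $\sigma$. Then $Y_i^2 - 1$ is centred and sub-exponential with parameter of order $\sigma^2$. A Bernstein-type inequality for independent sub-exponential variables gives, for $t \in (0,1]$,
\[
\mathbb{P}\!\left(\Big|\tfrac{1}{n}\textstyle\sum_{i=1}^n Y_i^2 - 1\Big| > t\right) \le 2\exp\!\left(-\frac{n t^2}{c\,\sigma^4}\right),
\]
and the claimed constant $256$ is obtained by applying this with $t = \theta/2$ and keeping careful track of the sub-exponential norm constants (so that $c/4 \le 256$ suffices).

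Finally, I would union bound: over the at most $9^k$ points $v \in \mathcal{N}$ for a fixed $S$, and then over the at most $\binom{p}{k} \le (ep/k)^k$ choices of $S$. This yields
\[
\mathbb{P}\bigl(X \notin \mathrm{RIP}_{n,p}(k,\theta)\bigr) \le 2\left(\frac{9ep}{k}\right)^{\!k}\exp\!\left(-\frac{n\theta^2}{256\sigma^4}\right) = 2\exp\!\left\{k\log\!\left(\frac{9ep}{k}\right) - \frac{n\theta^2}{256\sigma^4}\right\},
\]
which is the advertised bound. The only genuinely delicate step is the Bernstein-type concentration: one must make sure that the sub-exponential constant for $Y_i^2 - 1$ really scales like $\sigma^2$ (rather than like $\sigma^2 + 1$ or similar), and that the constant extracted from the standard proof of Bernstein's inequality closes up to $256$ after accounting for the factor $(1/4)$ coming from $t = \theta/2$. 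Everything else (the net construction, the linearization lemma, the $\binom{p}{k}$ bound) is routine.
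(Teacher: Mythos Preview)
Your proposal is correct and follows essentially the same route as the paper: fix a $k$-subset, pass to a $1/4$-net of size $9^k$ via the linearization $\sup_u |u^\top A u|\le 2\max_{v\in\mathcal N}|v^\top Av|$, apply sub-exponential concentration to $\|Xv\|_2^2-1$, and union-bound over nets and $\binom{p}{k}\le(ep/k)^k$ subsets. The paper carries out the ``delicate'' step you flag via its Lemmas~\ref{Lemma:SquareSubgaussian} and~\ref{Lemma:LM-type}, which show that if $Y$ is sub-Gaussian with parameter $\sigma$ then $Y^2-\mathbb E Y^2$ has MGF bounded by $e^{16\sigma^4\lambda^2}$ for $|\lambda|\le 1/(4\sigma^2)$, yielding the constant $64\sigma^4$ at level $\theta$ and hence $256\sigma^4$ at level $\theta/2$.
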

In order to clarify the notion of asymptotic regimes used in this paper, we introduce the following. 
\begin{definition}
For $0\leq \alpha \leq 1$, define the asymptotic regime
\[
\mathcal{R}_\alpha: = \biggl\{(p_n, k_n, \theta_n)_n: p,k\to\infty \text{ and } n\gg \frac{k_n^{1+\alpha}\log(p_n)}{\theta_n^2}\biggr\}.
\]
\end{definition}
It is an immediate consequence of~\eqref{Eq:randomRIP} that for $(p,k,\theta) = (p_n,k_n,\theta_n) \in \mathcal{R}_0$ we have, $\lim_{n\to\infty} \tilde{Q}^{\otimes(n\times p)}(X\in\mathrm{RIP}_{n,p}(k,\theta)) = 1.$

\section{Certification of Restricted Isometry}
\label{SEC:certif}
\subsection{Objectives and definition}

In practice, it is useful to know with certainty whether a particular realization of a random design matrix satisfies the RIP condition. It is known that the problem of deciding if a given matrix is RIP is NP-hard \citep{BanDob12}. However, NP-hardness is a only a statement about worst-case instances. It would still be of great use to have an algorithm that can correctly decide RIP property for an average instance of a design matrix, with some accuracy. Such an algorithm should identify a high proportion of RIP matrices generated through random design and make no false positive claims. We call such an algorithm an \emph{average-case certifier}, or a \emph{certifier} for short. %whose precise definition is given below.

\begin{definition}[Certifier]
\label{DEF:certif}
Given a parameter sequence $(p,k,\theta) = (p_n,k_n,\theta_n)$, we define a \emph{certifier for $\tilde{Q}^{\otimes(n\times p)}$-random matrices} to be a sequence $(\psi_n)_n$ of measureable functions $\psi_n:  \mathbb{R}^{n\times p}\to \{0,1\}$, such that 
\begin{equation}
\label{Eq:Certifier}
\psi_n^{-1}(1) \subseteq \mathrm{RIP}_{n,p}(k,\theta)  \qquad \text{and} \qquad \limsup_{n\to\infty} \tilde{Q}^{\otimes(n\times p)}\bigl(\psi_n^{-1}(0)\bigr) \leq 1/3.
\end{equation}
\end{definition}

Note the definition of a certifier depends on both the asymptotic parameter sequence $(p_n,k_n,\theta_n)$ and the sub-Gaussian distribution $Q$. However, when it is clear from the context, we will supress the dependence and refer to certifiers for $\mathrm{RIP}_{n,p}(k,\theta)$ properties of $\tilde{Q}^{\otimes (n\times p)}$-random matrices simply as `certifiers'.

The two defining properties in~\eqref{Eq:Certifier} can be understood as follows. The first condition means that if a certifier outputs 1, we know with certainty that the matrix is RIP. The second condition means that the certifier is not overly conservative; it is allowed to output 0 for at most one third (with respect to $\tilde{Q}^{\otimes(n\times p)}$ measure) of the matrices. The choice of $1/3$ in the definition of a certifier is made to simplify proofs. However, all subsequent results will still hold if we replace $1/3$ by any constant in $(0,1)$. In view of Proposition~\ref{Prop:FiniteSampleBound}, the second condition in~\eqref{Eq:Certifier} can be equivalently stated as
\[
\lim_{n\to\infty} \tilde{Q}^{\otimes(n\times p)}\bigl\{\psi_n(X) = 1 \bigm | X\in \mathrm{RIP}_{n,p}(k,\theta) \bigr\} \geq 2/3
\]

With such a certifier, given an arbitrary problem fitting the sparse linear model, the matrix $X$ could be tested for the restricted isometry property, with some expectation of a positive result.  This would be particularly interesting given a certifier in the parameter regime $n \ll \theta_n^2 k_n^2$, in which presently known polynomial-time certifiers cannot give positive results.

Even though it is not the main focus of our paper, we also note that a certifier $\psi$ with the above properties for some distribution $Q \in \cQ$ would form a certifier/distribution couple $(\psi,Q)$, that yields in the usual manner a Las Vegas algorithm to generate RIP matrices. The (random) algorithm keeps generating random matrices $X\sim \tilde{Q}^{\otimes(n\times p)}$ until $\psi_n(X) = 1$. The number of times that the certifier is invoked has a geometric distribution with success probability $\tilde{Q}^{\otimes(n\times p)}\bigl(\psi_n^{-1}(1)\bigr)$. Hence, the Las Vegas algorithm runs in randomized polynomial time if and only if $\psi_n$ runs in randomized polynomial time.

\subsection{Certifier properties}
Although our focus is on algorithmically efficient certifiers, we establish first the properties of a certifier that is computationally intractable. This certifier serves as a benchmark for the performance of other candidates. Indeed, we exhibit in the following proposition a certifier, based on the $k$-sparse operator norm, that works uniformly well in the same asymptotic parameter regime $\mathcal{R}_0$, where $\tilde Q^{\otimes(n\times p)}$-random matrices are RIP with asymptotic probability 1. For clarity, we stress that our criterion when judging a certifier will always be its uniform performance over asymptotic regimes $\mathcal{R}_\alpha$ for some $\alpha\in[0,1]$.

\begin{proposition} Suppose $(p,k,\theta) = (p_n,k_n,\theta_n) \in \mathcal{R}_0$. Furthermore,  Let $Q\in\cQ$ and $X\sim\tilde{Q}^{\otimes (n\times p)}$. Then the sequence of tests $(\psi_{\text{op},k})_n$ based on sparse operator norms, defined by
\[
\psi_{\text{op},k}(X) = \mathbbm{1}\biggl\{\|X^\top X - I_p\|_{\mathrm{op},k} \leq \theta\biggr\}.
\]
is a certifier for $\tilde{Q}^{\otimes (n\times p)}$-random matrices.
\label{Prop:ExpTimeCertifier}
\end{proposition}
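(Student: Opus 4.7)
The proposition packages together two observations that the paper has essentially already made available: the equivalent characterization of $\mathrm{RIP}$ via the sparse operator norm $\|\cdot\|_{\mathrm{op},k}$ and the concentration bound in Proposition~\ref{Prop:FiniteSampleBound}. The plan is therefore to check each of the two defining properties of a certifier in turn, with almost no work beyond quoting these two facts.

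For the first condition in~\eqref{Eq:Certifier}, I would simply invoke the equivalent formulation recalled in Section~\ref{SEC:RIP}: $X\in\mathrm{RIP}_{n,p}(k,\theta)$ if and only if $\|X^\top X - I_p\|_{\mathrm{op},k}\leq\theta$. Consequently $\psi_{\mathrm{op},k}^{-1}(1)$ is exactly $\mathrm{RIP}_{n,p}(k,\theta)$ (in particular it is a subset), and the no-false-positive requirement holds deterministically, with no dependence on the distribution $Q$ or on the asymptotic regime.

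For the second condition, I would feed the assumption $(p_n,k_n,\theta_n)\in\mathcal{R}_0$ into Proposition~\ref{Prop:FiniteSampleBound}. Since $\psi_{\mathrm{op},k}(X)=0$ occurs if and only if $X\notin\mathrm{RIP}_{n,p}(k,\theta)$, that proposition gives
\[
\tilde{Q}^{\otimes(n\times p)}\bigl(\psi_n^{-1}(0)\bigr)\leq 2\exp\biggl\{k\log\!\biggl(\frac{9ep}{k}\biggr)-\frac{n\theta^2}{256\sigma^4}\biggr\}.
\]
The only analytic step is to check that the exponent tends to $-\infty$ under the regime $\mathcal{R}_0$. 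Using $k\log(9ep/k)\leq k\log(9ep)=O(k\log p)$ and the assumption $n\gg k\log(p)/\theta^2$, the second term dominates, so the bound goes to $0$, which is more than enough to beat the threshold $1/3$.

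There is no real obstacle here; the statement is essentially a direct corollary of Proposition~\ref{Prop:FiniteSampleBound} together with the sparse-operator-norm characterization of RIP. The only point worth stating cleanly is the elementary bound $k\log(9ep/k)=O(k\log p)$, which is where the $\log p$ in the definition of $\mathcal{R}_0$ is used. The fact that the resulting certifier is computationally intractable is not asserted in this proposition, so no lower-bound argument is needed; the result just serves as an information-theoretic benchmark for the later hardness theorem.
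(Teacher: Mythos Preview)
Your proposal is correct and matches the paper's own proof essentially line for line: the paper simply notes that $\|X^\top X - I_p\|_{\mathrm{op},k}\leq\theta$ is by definition equivalent to $X\in\mathrm{RIP}_{n,p}(k,\theta)$, and then invokes Proposition~\ref{Prop:FiniteSampleBound} to conclude that this event has probability tending to $1$ under $\tilde Q^{\otimes(n\times p)}$ in the regime $\mathcal{R}_0$. Your added remark bounding $k\log(9ep/k)$ by $O(k\log p)$ is the only extra detail, and it is exactly the routine step the paper leaves implicit.
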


By a direct reduction from the clique problem, one can show that it is NP-hard to compute the $k$-sparse operator norm of a matrix. Hence the certifier $\psi_{\mathrm{op},k}$ is computationally intractable. The next proposition concerns the certifier property of a test based on the maximum incoherence between columns of the design matrix. It follows directly from a well-known result on the incoherence parameter of a random matrix (see, e.g.~\citet[Proposition~6.2]{CSbook}) and allows the construction of a polynomial-time certifier that works uniformly well in the asymptotic parameter regime $\mathcal{R}_1$.

\begin{proposition} 
Suppose $(p,k,\theta) = (p_n,k_n,\theta_n)$ satisfies $n \geq 196\sigma^4 k^2\log(p)/\theta^2$. Let $Q\in\cQ$ and $X\sim\tilde{Q}^{\otimes(n\times p)}$, then the tests $\psi_{\infty}$ defined by
\[
\psi_\infty(X) = \mathbbm{1}\biggl\{\|X^\top X - I_p\|_\infty \leq 14\sigma^2\sqrt\frac{\log(p)}{n}\biggr\}\,,
\]
is a certifier for $\tilde Q^{\otimes (n\times p)}$-random matrices.
\label{Prop:PolyTimeCertifier}
\end{proposition}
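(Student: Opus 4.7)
The statement has two components: (i) the certifier never makes a false positive claim, i.e.\ $\psi_\infty^{-1}(1) \subseteq \mathrm{RIP}_{n,p}(k,\theta)$, and (ii) under $\tilde Q^{\otimes(n\times p)}$ the test accepts with probability at least $2/3$ asymptotically. The plan is to deduce (i) from Gershgorin's circle theorem applied to $k\times k$ submatrices of $X^\top X$, and to deduce (ii) from a sub-exponential Bernstein inequality together with a union bound over the $\binom{p}{2}+p$ entries of $X^\top X - I_p$.

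For (i), suppose $\psi_\infty(X)=1$, so $\|X^\top X - I_p\|_\infty \leq \epsilon := 14\sigma^2\sqrt{\log(p)/n}$. Fix any $S\subseteq\{1,\ldots,p\}$ with $|S|\leq k$ and consider $M = X_{*S}^\top X_{*S} - I_{|S|}$; each entry of $M$ has absolute value at most $\epsilon$. By Gershgorin's theorem every eigenvalue of $M$ lies in $[-k\epsilon, k\epsilon]$, hence $\|M\|_{\mathrm{op}}\leq k\epsilon$. The hypothesis $n\geq 196\sigma^4 k^2 \log(p)/\theta^2$ is exactly $k\epsilon \leq \theta$, so by the submatrix characterisation of RIP given after Definition~\ref{DEF:RIP}, $X\in\mathrm{RIP}_{n,p}(k,\theta)$.

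For (ii), write $X_{\ell i} = Z_{\ell i}/\sqrt{n}$ with $Z_{\ell i}\stackrel{\mathrm{iid}}{\sim} Q \in \cQ$, so that $(X^\top X)_{ij} = \tfrac{1}{n}\sum_{\ell=1}^n Z_{\ell i}Z_{\ell j}$ and $(X^\top X)_{ii}-1 = \tfrac{1}{n}\sum_{\ell=1}^n (Z_{\ell i}^2 - 1)$. Each summand $Z_{\ell i}Z_{\ell j}$ (resp.\ $Z_{\ell i}^2-1$) is centred and sub-exponential with parameter a constant multiple of $\sigma^2$, because the product of two $\sigma$-sub-Gaussians is sub-exponential on that scale. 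Bernstein's inequality therefore yields, for any fixed $(i,j)$,
\[
\mathbb{P}\bigl(|(X^\top X - I_p)_{ij}|\geq t\bigr) \leq 2\exp\bigl(-cn\min(t^2/\sigma^4, t/\sigma^2)\bigr)
\]
for an absolute constant $c>0$. Choosing $t = 14\sigma^2 \sqrt{\log(p)/n}$ places us in the quadratic regime of Bernstein (since $t/\sigma^2 \leq 1$ for $n\gtrsim \log p$), giving a per-entry failure probability of order $p^{-c'}$ for a sufficiently large $c'$. A union bound over the at most $p^2$ entries makes the total failure probability vanish, so $\mathbb{P}(\psi_n(X)=1)\to 1$, which is certainly at least $2/3$ eventually.

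\paragraph{Main obstacle.}
The only delicate step is tracking constants in Bernstein's inequality so that the claimed constants $14$ and $196$ can indeed be achieved uniformly in $\sigma$. The authors signal that this is routine and direct a reader to \citet[Proposition~6.2]{CSbook} for the cleanest version; the body of the proof is then essentially Gershgorin for the ``no false positives'' direction and Bernstein plus union bound for the ``high acceptance probability'' direction.
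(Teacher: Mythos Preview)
Your argument is correct. Part (i) is identical in substance to the paper's: the paper writes $\|X_{*T}^\top X_{*T} - I_k\|_{\mathrm{op}} \leq k\|X_{*T}^\top X_{*T} - I_k\|_\infty$, which is exactly the Gershgorin bound you invoke.

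For part (ii) the paper takes a slightly different route. Rather than bounding each entry of $X^\top X - I_p$ directly via a Bernstein inequality for products of sub-Gaussians, it observes that $\|A\|_\infty \leq \|A\|_{\mathrm{op},2}$ (any entry sits inside some $2\times 2$ principal submatrix) and then recycles verbatim the covering-net argument from the proof of Proposition~\ref{Prop:FiniteSampleBound} with $k=2$, together with Lemma~\ref{Lemma:LM-type}. This yields the bound $1-2\binom{p}{2}9^2\exp\{-196\log(p)/64\}\to 1$, so the constants $14$ and $196$ drop out of the paper's own Lemma~\ref{Lemma:LM-type} rather than from an external Bernstein constant. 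Your entrywise approach is more direct and self-contained for this particular statement; the paper's approach has the advantage that it reuses already-established machinery and makes the provenance of the constants explicit without any additional tracking.
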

Proposition~\ref{Prop:PolyTimeCertifier} shows that, when the sample size $n$ is above $k^2\log(p)/\theta^2$ in magnitude (in particular, this is satisfied asymptotically when $(p,k,\theta) = (p_n,k_n,\theta_n) \in\mathcal{R}_1$), there is a polynomial time certifier. In other words, in this high-signal regime, the average-case decision problem for RIP property is much more tractable than indicated by the worst-case result. On the other hand, the certifier in Proposition~\ref{Prop:PolyTimeCertifier} works in a much smaller parameter range when compared to $\psi_{\mathrm{op},k}$ in Proposition~\ref{Prop:ExpTimeCertifier}. Combining Proposition~\ref{Prop:ExpTimeCertifier} and~\ref{Prop:PolyTimeCertifier}, we have the following schematic diagram (Figure~\ref{Fig:1}). When the sample size is lower than specified in $\mathcal{R}_0$, the property does not hold, with high probability, and no certifier exists. A computationally intractable certifier works uniformly over $\mathcal{R}_0$. On the other end of the spectrum, when the sample size is large enough to be in $\mathcal{R}_1$, a simple certifier based on the maximum incoherence of the design matrix is known to work in polynomial time. This leaves open the question of whether (randomized) polynomial time certifiers can work uniformly well in $\mathcal{R}_0$, or $\mathcal{R}_\alpha$ for any $\alpha \in [0,1)$. We will see in the next section that, assuming a weaker variant of the Planted Clique hypothesis from computational complexity theory, $\mathcal{R}_1$ is essentially the largest asymptotic regime where a randomized polynomial time certifier can exist. 
\begin{figure}[htbp]
\begin{center}
 \includegraphics[width=0.7\textwidth]{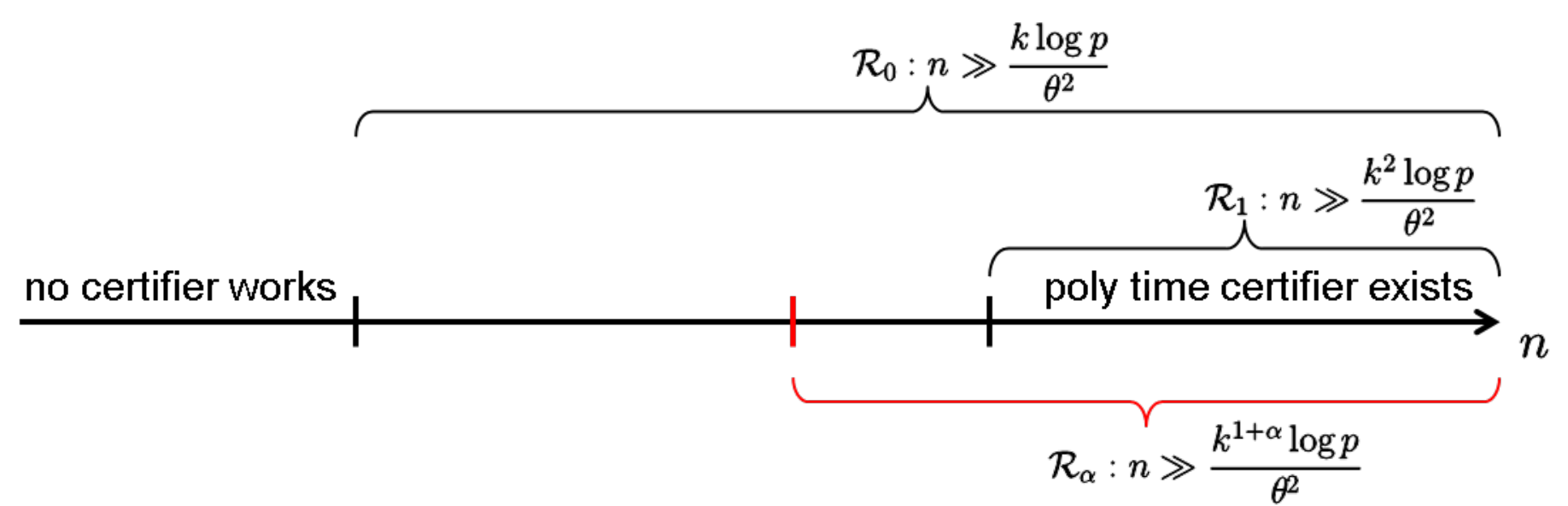}
 \label{Fig:1}
 \caption{Schematic digram for existence of certifiers in different asymptotic regimes.}
 \end{center}
\end{figure}

\section{Hardness of Certification}
\label{SEC:hard}
\subsection{Planted dense subgraph assumptions}
\label{SEC:weak}

We show in this section that certification of RIP property is an average-case hard problem in the parameter regime $\mathcal{R}_\alpha$ for any $\alpha < 1$. This is precisely the regime not covered by Proposition~\ref{Prop:PolyTimeCertifier}. The average-case hardness result is proved via reduction to the planted dense subgraph assumption. 

For any integer $m\geq 0$, denote $\mathbb{G}_m$ the collection of all graphs on $m$ vertices. We write $V(G)$ and $E(G)$ for the set of vertices and edges of a graph $G$. For $H \in \mathbb{G}_\kappa$ where $\kappa\in\{0,\ldots,m\}$, let $\mathcal{G}(m,1/2,H)$ be the random graph model that generates a random graph $G$ on $m$ vertices as follows. It first picks $\kappa$ random vertices $K\subseteq V(G)$ and plants an isomorphic copy of $H$ on these $\kappa$ vertices, then every pair of vertices not in $K\times K$ is connected by an edge independently with probability $1/2$.  We write $\mathbf{P}_H$ for the probability measure on $\mathbb{G}_m$ associated with $\mathcal{G}(m,1/2,H)$. Note that if $H$ is the empty graph, then $\mathcal{G}(m,1/2,\emptyset)$ describes the Erd\H{o}s--R\'{e}nyi random graph. With slight abuse of notation, we write $\mathbf{P}_0$ in place of $\mathbf{P}_{\emptyset}$. On the other hand, for $\epsilon \in (0,1/2]$, if $H$ belongs to the set
\[
\cH = \cH_{\kappa,\epsilon} := \biggl\{H \in \mathbb{G}_\kappa: \#E(H) \geq (1/2+\epsilon)\frac{\kappa(\kappa-1)}{2}\biggr\},
\]
then $\mathcal{G}(m,1/2,H)$ generates random graphs that contain elevated local edge density. The planted dense graph problem concerns testing apart the following two hypotheses:
\begin{equation}
H_0: G\sim \mathcal{G}(m,1/2,\emptyset) \qquad \text{and}  \qquad H_1: G\sim \mathcal{G}(m,1/2,H) \text{ for some } H\in\mathcal{H}_{\kappa,\epsilon}.
\label{Eq:testing}
\end{equation}
It is widely believed that for $\kappa = O(m^{1/2-\delta})$, there does not exist randomized polynomial time tests to distinguish between $H_0$ and $H_1$ (see, e.g. \citet{Jer92, FeiKra03, FelGriRey13}). More precisely, we have the following assumption. 

\begin{assumption*}
Fix $\epsilon \in(0,1/2]$ and $\delta\in(0,1/2)$. let $(\kappa_m)_m$ be any sequence of integers such that $\kappa_m\to\infty$ and $\kappa_m = O\bigl(m^{1/2-\delta}\bigr)$. For any sequence of randomized polynomial time tests $(\phi_m: \mathbb{G}_{m}\to \{0,1\})_m$, we have
\[
\liminf_m \Bigl\{\mathbf{P}_0\bigl(\phi(G)=1\bigr) + \max_{H\in\mathcal{H}_{\kappa,\epsilon}} \mathbf{P}_{H}\bigl(\phi(G)=0)\bigr)\Bigr\} > 1/3\, .
\]
\end{assumption*}
We remark that if $\epsilon = 1/2$, then $\mathcal{H}_{\kappa,\epsilon}$ contains only the $\kappa$-complete graph and the testing problem becomes the well-known planted clique problem (cf. \citet{Jer92} and references in \citet{BerRig13a,BerRig13b}).   

The difficulty of this problem has been used as a primitive for hardness of other tasks, such as cryptographic applications, in \cite{JuePei00}, testing for $k$-wise dependence in \cite{AloAndKau07}, approximating Nash equilibria in \cite{HazKra11}. 
In this case, Assumption $\textbf{(A1)}$ is a version of the planted clique hypothesis (see, e.g. \citet[Assumption~$\mathbf{A}_{\mathrm{PC}}$]{BerRig13b}). We emphasize that Assumption A1 is significantly milder than the planted clique hypothesis (since it allows any $\epsilon \in (0, 1/2]$), or that an hypothesis on planted random graphs. We also note that when $\kappa \ge C_\epsilon \sqrt{m}$, spectral methods can be used to detect such graphs with high probability. Indeed, when $G$ contains a graph of $\cH$, $A_G - \mathbf{1}\mathbf{1}^\top /2$ has a leading eigenvalue greater than $\epsilon (\kappa-1)$, whereas it is of order $\sqrt{m}$ for a usual Erd\H{o}s--R\'{e}nyi random graph.

The following theorem relates the hardness of the planted dense subgraph testing problem to the hardness of certifying restricted isometry of random matrices. We recall that the distribution of $X$ is that of an $n\times p$ random matrix with entries independently and identically sampled from $\tilde{Q} \stackrel{d}{=} Q/\sqrt{n}$, for some $Q\in\cQ$. We also write $\Psi_{\mathrm{rp}}$ for the class of randomized polynomial time certifiers.

\begin{theorem}
\label{Thm:Main}
Assume $\textbf{(A1)}$ and fix any $\alpha \in [0,1)$. Then there exists a sequence $(p,k,\theta) = (p_n,k_n,\theta_n)\in\mathcal{R}_\alpha$, such that there is no certifier/distribution couple $(\psi,Q)\in \Psi_{\mathrm{rp}}\times \cQ$ with respect to this sequence of parameters. 
\end{theorem}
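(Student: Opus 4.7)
The plan is to embed the planted dense subgraph testing problem of Assumption (A1) into RIP certification, so that an efficient certifier would contradict (A1). Given a graph $G$ on $m=n+p$ vertices and a uniformly random permutation $\pi$ of $[m]$, consider the $n\times p$ matrix
\[
X_{ij}(G,\pi)\;=\;\frac{2\,\mathbf{1}_{\{\pi(i),\pi(n+j)\}\in E(G)}-1}{\sqrt n},\qquad i\in[n],\ j\in[p].
\]
The $np$ pairs used are pairwise distinct (always one ``row-vertex'' $\pi(i)$ and one ``column-vertex'' $\pi(n+j)$), so under $H_0$ the entries of $X$ are iid Rademacher$/\sqrt n$; equivalently $X\sim \tilde Q^{\otimes(n\times p)}$ for $Q$ the symmetric Rademacher law, which belongs to $\cQ_\sigma$ since $\sigma\ge 1$. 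Proposition~\ref{Prop:FiniteSampleBound} applied in the regime $\mathcal{R}_\alpha\subseteq\mathcal{R}_0$ then gives $X\in\mathrm{RIP}_{n,p}(k,\theta)$ with probability tending to $1$, so any certifier $\psi$ must return $1$ on $X$ with probability at least $2/3-o(1)$ under $H_0$.

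Under $H_1$ with planted $H\in\mathcal{H}_{\kappa,\epsilon}$ on vertex set $K$, the permutation $\pi$ induces, conditional on the sizes, a uniformly random bipartition of $K$ into $K_R:=\{i\in[n]:\pi(i)\in K\}$ and $K_C:=\{j\in[p]:\pi(n+j)\in K\}$ of hypergeometric sizes $a:=|K_R|$ and $\kappa-a=|K_C|$ centred at $\kappa n/m$ and $\kappa p/m$. I would use the $|K_C|$-sparse unit-norm witness $x=\mathbf{1}_{K_C}/\sqrt{|K_C|}$ and split $\|Xx\|_2^2=|K_C|^{-1}\sum_i(\sum_{j\in K_C}X_{ij})^2$ according to whether $i\in K_R$. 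Cauchy--Schwarz on the $i\in K_R$ part yields
\[
\sum_{i\in K_R}\Bigl(\sum_{j\in K_C}X_{ij}\Bigr)^{\!2} \;\ge\; \frac{(2e_b-a\,|K_C|)^2}{a\,n},
\]
where $e_b$ is the number of edges of $H$ cut by the bipartition $(K_R,K_C)$. Since each edge of $H$ is cut with probability $2a(\kappa-a)/(\kappa(\kappa-1))$ under a uniform bipartition, one has $\mathbb{E}[e_b]\ge(1/2+\epsilon)\,a\,|K_C|$; a McDiarmid bound (each vertex swap perturbs $e_b$ by at most $2(\kappa-1)$), combined with $a\to\infty$, upgrades this to $e_b\ge(1/2+\epsilon/2)\,a\,|K_C|$ with high probability. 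A routine sub-exponential concentration for the $i\notin K_R$ part then yields $\|Xx\|_2^2\ge 1+c_\epsilon\,a\,|K_C|/n \gtrsim 1+c_\epsilon\,\kappa^2 p/m^2$ w.h.p., so $X\notin\mathrm{RIP}_{n,p}(k,\theta)$ provided $\theta\lesssim\epsilon^2\kappa^2 p/m^2$ and $k\ge|K_C|$.

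To close the argument, fix $\alpha\in[0,1)$ and $\epsilon\in(0,1/2]$ as in (A1), pick a sufficiently small $\delta\in(0,1/2)$, and set $\kappa_m:=\lfloor m^{1/2-\delta}\rfloor$, $k_m:=\kappa_m$, $\theta_m\asymp \epsilon^2 m^{-2\delta}$, $n_m\asymp k_m^{1+\alpha}\log^2(m)/\theta_m^2$, and $p_m:=m-n_m$. An exponent check shows that $\delta$ can be chosen so that $n_m=o(m)$ (hence $p_m\asymp m$) exactly when $\alpha<1$ (the borderline $\alpha=1$ being excluded in agreement with Proposition~\ref{Prop:PolyTimeCertifier}), and the resulting $(p_m,k_m,\theta_m)$ lies in $\mathcal{R}_\alpha$ while also satisfying $\theta_m\lesssim\epsilon^2\kappa_m^2 p_m/m^2$. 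A hypothetical certifier $(\psi,Q)\in\Psi_{\mathrm{rp}}\times\cQ$ with $Q$ the Rademacher law would then yield the randomized polynomial-time test $\phi_m(G):=1-\psi_n(X(G,\pi))$, which satisfies $\mathbf{P}_0(\phi=1)\le 1/3+o(1)$ by the null argument and $\sup_{H\in\mathcal{H}_{\kappa,\epsilon}}\mathbf{P}_H(\phi=0)=o(1)$ because $X\notin\mathrm{RIP}$ forces $\psi=0$ under $H_1$. The liminf of $\mathbf{P}_0(\phi=1)+\max_H \mathbf{P}_H(\phi=0)$ is thus at most $1/3$, contradicting Assumption (A1). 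I expect the main obstacle to be the McDiarmid step: because (A1) allows arbitrary $H\in\mathcal{H}_{\kappa,\epsilon}$ (for instance, two disjoint dense pieces with no bipartite edges between them), the lower bound on $e_b$ cannot exploit any structural property of $H$ and must rely purely on the permutation randomness---this is precisely the ``additional technical complication'' flagged in the introduction as the price of weakening the planted-clique hypothesis to (A1). A secondary concern is promoting the Rademacher $Q$ to arbitrary $Q\in\cQ$, which I would handle via a quantile/sign coupling in which each graph bit dictates from which of two halves of $\tilde Q$ an independent variate is drawn, preserving the $\tilde Q^{\otimes(n\times p)}$ law under $H_0$ while transferring the planted bias to the first moment of the signal block.
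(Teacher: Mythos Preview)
Your reduction is essentially the paper's: use a random bipartition of $V(G)$ to map adjacency bits to signs of the entries of $X$, so that under $H_0$ you obtain a genuine $\tilde Q^{\otimes(n\times p)}$ matrix, while under $H_1$ the planted dense subgraph forces a $k$-sparse direction with $\|Xx\|_2^2>1+\theta$. A few points of comparison are worth noting.

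Your flagged ``main obstacle'' is handled in the paper not by McDiarmid but by a direct second-moment computation (Lemma~\ref{Lemma:Events}) and Chebyshev. Both routes give a deviation bound for the bipartite edge count of order $\kappa/a^2$, and neither uses any structural property of $H$ beyond its edge count; your worry about adversarial $H$ (``two disjoint dense pieces'') is therefore already absorbed by the worst-case Lipschitz constant $2(\kappa-1)$ and is not an extra difficulty. Your ``quantile/sign coupling'' for general $Q$ is precisely the paper's median decomposition $\tilde Q=\tfrac12\tilde Q^++\tfrac12\tilde Q^-$; when you carry it out, note that the Cauchy--Schwarz step no longer reduces to the deterministic count $e_b$, and you must instead bound $\sum_{i\in K_R,j\in K_C}Z_{ij}$ in expectation (via $\mathbb{E}Y^+_{ij}=-\mathbb{E}Y^-_{ij}>0$) and then apply Hoeffding for sub-Gaussians, as in~\eqref{Eq:HoeffdingBound}. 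The one genuine addition in the paper that you omit is the block-averaging step (parameter $\ell=\lfloor k^\beta\rfloor$, Step~3 of Algorithm~\ref{Algo:Reduction}): this is not needed for the theorem as stated, since a single hard sequence in $\mathcal{R}_\alpha$ suffices and your parameters do lie there, but it allows the paper to exhibit hard sequences with sparsity up to $k\approx n^{1/(2-\beta)}$, breaking the $k\ll\sqrt n$ barrier implicit in your simpler construction.
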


Our proof of Theorem~\ref{Thm:Main} relies on the following ideas: Given a graph $G$, instance of the planted clique problem in the assumed hard regime, we construct $n$ random vectors based on the adjacency matrix of a bipartite subgraph of $G$, between two random sets of vertices. Each coefficient of these vectors is then randomly drawn from one of two carefully chosen distributions, conditionally on the presence or absence of a particular edge. This construction ensures that if the graph is an Erd\H{o}s--R\'{e}nyi random graph (i.e. with no planted graph), the vectors are independent with independent coefficients, with distribution $\tilde Q$. Otherwise, we show that with high probability, the presence of an unusually dense subgraph will make it very likely that the matrix does not satisfy the restricted isometry property, for a set of parameters in $\mathcal{R}_\alpha$. As a consequence, if there existed a certifier/distribution couple $(\psi,Q)\in \Psi_{\mathrm{rp}}\times \cQ$ in this range of parameters, it could be used~-~by using as input in the certifier the newly constructed matrix~-~to determine with high probability the distribution of $G$, violating our assumption $\textbf{(A1)}$.

We remark that this result holds for {\em any} distribution in $\cQ$, in contrast to computational lower bounds in statistical learning problems, that apply to a specific distribution. For the sake of simplicity, we have kept the coefficients of $X$ identically distributed, but our analysis is not dependent on that fact, and our result can be directly extended to the case where the coefficients are independent, with different distributions in $\cQ$. 

Theorem~\ref{Thm:Main} may be viewed as providing an asymptotic lower bound of the sample size $n$ for the existence of a computationally feasible certifier. It establishes this computational lower bound by exhibiting some specific `hard' sequences of parameters inside $\mathcal{R}_\alpha$ and shows through a reduction to the planted dense subgraph problem. All hardness results, whether in a worst-case (NP-hardness, or other) or the average-case (by reduction from a hard problem), are by nature statements on the impossibility of accomplishing a task in a computationally efficient manner, uniformly over a range of parameters. They are therefore always based on the construction of a `hard' sequence of parameters used in the reduction, for which a contradiction is shown. Here, the `hard' sequence is explicitly constructed in the proof to be some $(p,k,\theta) = (p_n,k_n,\theta_n)$ satisfying $p\geq n$ and $n^{1/(3-\alpha-4\beta)}\ll k\ll n^{1/(2-\beta)-\delta}$, for $\beta\in[0,(1-\alpha)/3)$ and any small $\delta > 0$. The tuning parameter $\beta$ is to allow additional flexibility in choosing these `hard' sequences. More precisely, using an averaging trick first seen in \citet{MaWu13}, we are able to show that the existence of such `hard' sequences is not confined only in the sparsity regime $k\ll n^{1/2}$ . We note that in all our `hard' sequences, $\theta_n$ must depend on $n$. An interesting extension is to see if similar computational lower bounds hold when restricted to a subset of $\mathcal{R}_\alpha$ where $\theta$ is constant.

\appendix

\section{Proofs of Main Results}
\label{SEC:Proofs}

\begin{proof}[Proof of Theorem~\ref{Thm:Main}]
We prove by contradiction. Assume the contrary, that $(\psi_n)_n$ is a polynomial time computable certifier for $\tilde{Q}^{\otimes(n\times p)}$-random matrices. Let $\xi$ denote the median of $\tilde Q$. By definition of the median, there exists a unique decomposition of the probability measure $\tilde Q$ as $\tilde Q = \frac{1}{2}\tilde Q^+ + \frac{1}{2}\tilde Q^-$, where $\tilde Q^+$ and $\tilde Q^-$ are probability measures supported on $(-\infty,\xi]$ and $[\xi, \infty)$ respectively.

For $\alpha < 1$ and $0\leq\beta<\frac{1}{3}(1-\alpha)$, let $(p,k,\theta) = (p_n, k_n, \theta_n)\in\mathcal{R}_\alpha$ be a sequence satisfying $p \geq n$, $n^{\frac{1}{3-\alpha-4\beta}}\ll k \ll n^{\frac{1}{2-\beta}-\delta}$ for some $\delta > 0$. Let $L = 10$ and $\ell = \lfloor k^{\beta}\rfloor$. Define $m = L\ell n$ and $\kappa = Lk$. We check that
\[
 \kappa^2 \asymp k^{2-\beta}k^{\beta} \ll n^{1-\delta}\ell \approx m^{1-\delta'}
\]
for some positive $\delta'$ that depends on $\delta$ only. We prove below that Algorithm~\ref{Algo:Reduction}, which runs in randomized polynomial time, can distinguish between $\mathbf{P}_0$ and $\mathbf{P}_H$ with zero asymptotic error for any choice of $H\in\cH_{\kappa,\epsilon}$. 
\begin{algorithm}[htbp]
\SetAlgoLined
\IncMargin{1em}
\DontPrintSemicolon
\KwIn{
$m \in \mathbb{N}$, $\kappa \in \{1,\ldots,m\}$, $G \in \mathbb{G}_m$, $L \in \mathbb{N}$
}
\vskip 1ex
\Begin{
\textbf{Step 1:} Let $N \leftarrow\lfloor m/L \rfloor$, $\ell \leftarrow \lfloor k^{\beta}\rfloor$, $n\leftarrow \lfloor N/\ell\rfloor$, $p\leftarrow p_n$, $k \leftarrow \lfloor \kappa/L\rfloor$. 
Draw $u_1,\ldots,u_N, w_1,\ldots,w_N$ uniformly at random without replacement from $V(G)$. Form $A = (A_{ij})\in \mathbb{R}^{N\times N}$ where $A_{ij} = 2\cdot \mathbbm{1}_{\{u_i \sim w_j\}} - 1$. \;
\textbf{Step 2:} Let $Y^+ = (Y^+_{ij})$ and $Y^- = (Y^-_{ij})$ be $N$-by-$N$ random matrices independent from all other random variables and from each other, and such that $Y^+_{ij}\stackrel{\mathrm{i.i.d.}}{\sim}\tilde{Q}^+$ and $Y^-_{ij}\stackrel{\mathrm{i.i.d.}}{\sim}\tilde{Q}^-$. Define $Z = (Z_{ij})$ by $Z_{ij} = \mathbbm{1}\{A_{ij}=1\}Y^+_{ij}+\mathbbm{1}\{A_{ij}=-1\}Y^-_{ij}$.\;
\textbf{Step 3:} For $0\leq a,b\leq \ell-1$, define $Z^{(a,b)} \in \mathbb{R}^{n\times n}$ by $Z^{(a,b)}_{i,j} = Z_{an + i, bn + j}$. Define $\tilde X \leftarrow \ell^{-1}\sum_{0\leq a,b < \ell} Z^{(a,b)}$. Finally, let $X \leftarrow \begin{pmatrix} \tilde{X} & \tilde{X}'\end{pmatrix}$ where $\tilde{X}' \in \mathbb{R}^{n\times (p-n)}$ has entries independently drawn from distribution $\tilde{Q}$.\;
\textbf{Step 4:} Let $\phi(G) \leftarrow 1- \psi_n(X)$.
}
\vskip1ex
\KwOut{$\phi(G)$}
\vskip 2ex
\caption{Pseudo-code for an algorithm to distinguish between $\mathbf{P}_{0}$ and $\mathbf{P}_H$.}
\label{Algo:Reduction}
\end{algorithm}

First, assume $G\sim \mathbf{P}_0$. Then matrix $A$ from Step 1 of Algorithm~\ref{Algo:Reduction} have independent Rademacher entries, which implies that $X \sim \tilde{Q}^{\otimes(n\times p)}$. Therefore, by~\eqref{Eq:Certifier} in Section~\ref{SEC:certif} we must have
\[
\mathbf{P}_0(\phi(G)=1) = \tilde{Q}^{\otimes(n\times p)}(\psi_n^{-1}(0)) \to 0.
\]

Next, assume $G$ is generated with probability measure $\mathbf{P}_{H}$ for some $H\in\mathcal{H}_{\kappa,\epsilon}$. We claim that
\begin{equation}
\tilde{X}\notin \mathrm{RIP}_{n,n}\Bigl(k, \frac{ck^2}{n\ell^2}\Bigr)
\label{Eq:Claim2}
\end{equation}
for some absolute positive constant $c$. Since
\[
\frac{k^2}{n\ell^2} \gg \sqrt\frac{k^{1+\alpha}}{n} \gg \theta,
\]
we have that for large $n$, $\tilde{X} \notin \mathrm{RIP}_{n,n}(k,\theta)$. Hence $X$ is \emph{a fortiori} not an $\mathrm{RIP}_{n,p}(k,\theta)$ matrix. As a result, 
\[
\liminf_m \max_{H\in\cH_{\kappa,\epsilon}} \mathbf{P}_H\bigl(\phi(G)=0)\bigr) <1/3,
\]
contradicting Hypothesis $\mathbf{A}_H$.

It remains to verify the claimed result in~\eqref{Eq:Claim2}. Let $K\subseteq V(G)$ be the $\kappa$-subset of vertices on which the subgraph $H$ is planted. We write $U = \{u_1,\ldots,u_N\}$ and $W = \{w_1,\ldots,w_N\}$ for the two random subsets of vertices. Let $N_{U,W;K}$ be the random variable counting the number of edges in $G$ with two endpoints in $U\cap K$ and $W\cap K$ respectively. Then
\begin{align*}
N_{U,W;K} &= \#\Bigl\{\{u,w\}\in E(G):u\in U\cap K, w\in W\cap K\Bigr\}\\
&= \sum_{u\in K}\sum_{w\in K} \mathbbm{1}\{u\in U\}\mathbbm{1}\{w\in W\}\mathbbm{1}\{u\sim w\}.
\end{align*}
Define
\[
\Omega_1 := \biggl\{N_{U,W;K}\geq \biggl(\frac{1}{2}+\frac{\epsilon}{4}\biggr)k^2\biggr\}\cap \biggl\{\bigl|\# U\cap K - k\bigr| \leq \frac{\epsilon}{8}k\biggr\}\cap \biggl\{\bigl|\# W\cap K - k\bigr| \leq \frac{\epsilon}{8}k\biggr\}.
\]
Lemma~\ref{Lemma:Events} below shows that $\Omega_1$ has asymptotic probability 1. Note $\Omega_1$ is in the $\sigma$-algebra of $(U,W)$. Let $U = U_0$ and $W = W_0$ be any realization satisfying $\Omega_1$. We write $\mathbb{P}^{U_0,W_0}$ and $\mathbb{E}^{U_0,W_0}$ as shorthand for the probability and expectation conditional on $U = U_0$ and $W = W_0$.

For each $j\in\{1,\ldots,n\}$, define $s_j:= \sum_{u_i\in U\cap K}A_{i,j}$. Write $k_1 := (1-\epsilon/8)k$ and $k_2 = (1+\epsilon/8)k$. Let $S:= \{i: u_i\in U\cap K\}$, and let $T$ be a subset of $k_1$ indices in $\{1,\ldots,n\}$ corresponding to the $k_1$ largest values of $s_j$ (breaking ties arbitrarily). Note that $S$ and $T$ are functions of $U$ and $V$. On the event $U=U_0$ and $W = W_0$, both $\#S = \#U\cap K$ and $\#W\cap K$ are bounded in the interval $[k_1,k_2]$, so in particular $k_1\leq \#W\cap K$. We have
\[
\sum_{w_j\in W\cap K} s_j = 2N_{U,W;K} - \#(U\cap K)\times\#(W\cap K)\geq \bigl\{(1+\epsilon/2)-(1+\epsilon/8)^2\bigr\}k^2\geq \frac{\epsilon}{5}k^2.
\]
As elements of $T$ index columns of $A$ corresponding to largest values of $s_j$s, we have that on event $\{U = U_0, W = W_0\}$,
\begin{equation}
\sum_{j\in T} s_j \geq \frac{\# T}{\# W\cap K}\frac{\epsilon}{5}k^2\geq \frac{\epsilon}{5}\frac{k^2k_1}{k_2}\geq \frac{\epsilon}{6}kk_1.
\label{Eq:sumsj}
\end{equation}
Define the unit vector $v \in \mathbb{R}^n$ by $v_T = k_1^{-1/2}\mathbf{1}_{k_1}$ and $v_{T^c} = 0$. Note that $v$ is $k_1$-sparse and hence also $k$-sparse. Conditional on $U=U_0$ and $W = W_0$, $Z_{ij} = Y^+_{ij}$ if $A_{ij}=1$ and $Z_{ij}=Y^{-}_{ij}$ if $A_{ij}=-1$. By definition of $\tilde Q^+$ and $\tilde Q^-$, and the fact that $\tilde Q$  is not a point mass, we have $\mathbb{E}Y^+_{ij} = -\mathbb{E}Y^-_{ij} = c_1/\sqrt{n}$ for some absolute constant $c_1 > 0$. By~\eqref{Eq:sumsj}, the sum $\sum_{i\in S, j\in T} Z_{ij}$ can be bounded below in conditional expectation by
\begin{align*}
\mathbb{E}^{U_0,W_0}\!\! \sum_{i\in S,j\in T}Z_{ij} &\geq \mathbb{E}^{U_0,W_0}\biggl(\sum_{i\in S, j\in T} (\mathbbm{1}\{A_{ij}=1\}Y^+_{ij}+\mathbbm{1}\{A_{ij}=-1\}Y^-_{ij})\biggr) \\
&= \frac{c_1}{\sqrt n}\biggl(\sum_{j\in T}s_j \biggr) \geq  \frac{c_1}{\sqrt n}\frac{\epsilon}{6}kk_1\, .
\end{align*}
By Lemma~\ref{Lemma:PositiveNegative}, both $Y_{ij}^+-\mathbb{E}Y_{ij}^+$ and $Y_{ij}^--\mathbb{E}Y_{ij}^-$ are sub-Gaussian with parameter at most $c_2\sigma/\sqrt{n}$ for some absolute constant $c_2 > 0$. By Hoeffding's inequality for sums of sub-Gaussian random variables (see e.g. \citet[Proposition~5.10]{Ver12}),
\begin{equation}
\label{Eq:HoeffdingBound}
\mathbb{P}^{U_0,W_0}\biggl(\sum_{i\in S, j\in T}Z_{ij} > \frac{c_1\epsilon}{12\sqrt{n}}kk_1\biggr)\geq 1-2\exp\biggl\{-\frac{(\frac{c_1\epsilon}{12\sqrt{n}}kk_1)^2}{2c_2^2\sigma^2k_1k_2k/n}\biggr\} \to 1.
\end{equation}
By~\eqref{Eq:HoeffdingBound} and the fact that $\mathbb{P}(\Omega_1)\to 1$, the event
\[
\Omega_2 := \biggl\{\sum_{i\in S, j\in T} Z_{ij} \geq \frac{c_1\epsilon kk_1}{12\sqrt{n}}\biggr\}
\]
has asymptotic probability 1.

Now define
\begin{align*}
 \tilde S &= \{i \in \{1,\ldots,n\}: \text{$u_{an + i}\in U\cap K$ for some $0\leq a\leq \ell-1$}\}\\
 \tilde T &= \{j \in \{1,\ldots,n\}: \text{$w_{bn + j}\in W\cap K$ for some $0\leq b\leq \ell-1$}\} 
\end{align*}
Also, define $v^{(b)} = (v_{bn+1},\ldots,v_{bn+n})^\top$ for $0\leq b\leq \ell-1$,  $\tilde v_{\mathrm{sum}} = \sum_{0\leq b\leq \ell-1} v^{(b)}$ and $\tilde v = \tilde v_{\mathrm{sum}} / \|\tilde v_{\mathrm{sum}}\|_2$. By Lemma~\ref{Lemma:Occupancy}, we have $\|\tilde{v}_{\mathrm{sum}}\|_\infty \leq c_2k_1^{-1/2}$ with asymptotic probability 1 for some $c_2$ depending on $\beta$ only. Hence $\|\tilde{v}_{\mathrm{sum}}\|_2 \leq c_2$. Thus, by Cauchy--Schwarz inequality, we have with asymptotic probability 1,
\[
 \|\tilde X_{\tilde S *} \tilde v\|_2 \geq \|\tilde v_{\mathrm{sum}}\|_2^{-1} \|\tilde v\|_0^{-1/2} \|\tilde X_{\tilde S*}\tilde v_{\mathrm{sum}}\|_1\geq \|\tilde v\|_2^{-1}\|\tilde v\|_0^{-1/2} \frac{1}{\ell \sqrt{k_1}}\sum_{i\in S,j\in T} Z_{ij}\geq \frac{c_3\epsilon k}{\ell\sqrt{n}}.
\]

On the other hand, the submatrix $\tilde X_{\tilde S^c*}$ has independent and identically distributed entries. By~\citet[Lemma~5.9]{Ver12}, for $i\in\tilde{S}^c$ and $1\leq j\leq n$, $\tilde X_{ij} = \ell^{-1}\sum_{a,b= 0}^{\ell-1} Z^{(a,b)}_{an+i, bn+j}$ is a centred sub-Gaussian random variable with sub-Gaussian parameter $\sigma/\sqrt{n}$ and variance $1/n$. Let $\tilde X_i$ denote the $i$th row vector of the matrix $\tilde X$, then $\tilde X_i^\top \tilde v$ is also a centred sub-Gaussian random variable with parameter $\sigma/\sqrt{n}$ and variance $1/n$. Using Lemma~\ref{Lemma:LM-type}, we have
\[
\mathbb{P}\biggl(\|\tilde X_{S^c*}\tilde v\|_2^2 - \frac{n-\#\tilde S}{n} \leq -\sqrt{\frac{\log n}{n-\#\tilde S}}\biggr) \leq \exp\biggl\{-\frac{\log n}{64\sigma^4}\biggr\}\to 0.
\]
Since $\#\tilde S\leq k_2$ with asymptotic probability 1, the event
\[
\Omega_3:=\biggl\{\|\tilde X_{\tilde S^c*}\tilde v\|_2^2 \geq 1 - \frac{k_2}{n} - \sqrt{\frac{2\log n}{n}}\biggr\}
\]
has asymptotic probability 1. Finally, since $\tilde X\tilde v = (\tilde X_{\tilde S*}\tilde v, \tilde X_{\tilde S^c*}v)^\top$, on $\Omega_2\cap\Omega_3$, 
\[
\|\tilde X\tilde v\|_2^2 = \|\tilde X_{\tilde S*}\tilde v\|_2^2 + \|\tilde X_{\tilde S^c*}v\|_2^2 \geq 1+\frac{c_3^2 \epsilon^2 k^2}{\ell^2 n} - \frac{k_2}{n} - \sqrt{\frac{2\log n}{n}}.
\]
The right hand side is at least $1+ck^2/n$ for some absolute positive constant $c$ for all large values of $n$. This verifies~\eqref{Eq:Claim2} and concludes the proof.
\end{proof}

\begin{lemma}
\label{Lemma:Events}
Let $G$ be a graph on $m$ vertices and $K$ a $\kappa$-subset of $V(G)$, such that the edge density of $G$ restricted to $K$ is at least $1/2+\epsilon$. Let $n,p$ be integers less than $m/2$. Choose $u_1,\ldots,u_n$ and $w_1,\ldots,w_p$ independently at random without replacement from $V(G)$. Denote $U = \{u_1,\ldots,u_n\}$ and $W = \{w_1,\ldots,w_p\}$. Define $N_{U,W;K}$ to be the number of edges with two endpoints in $U$ and $W$ respectively. Then for $m,n,p,\kappa$ sufficiently large.
\begin{align*}
\mathbb{P}\biggl\{\biggl|\# U\cap K - \frac{n\kappa}{m}\biggr| \leq \frac{\epsilon}{8}\frac{n\kappa}{m}\biggr\}&\leq \frac{8}{\epsilon}\sqrt\frac{m}{n\kappa},\\
\mathbb{P}\biggl\{\biggl|\# W\cap K - \frac{p\kappa}{m}\biggr| \leq \frac{\epsilon}{8}\frac{p\kappa}{m}\biggr\}&\leq \frac{8}{\epsilon}\sqrt\frac{m}{p\kappa},\\
\mathbb{P}\biggl\{N_{U,W;K}\geq \biggl(\frac{1}{2}+\frac{\epsilon}{4}\biggr)\frac{np\kappa^2}{m^2}\biggr\}&\leq \frac{4}{\epsilon}\sqrt\frac{m(p\kappa+n\kappa+m)}{np\kappa^2}.
\end{align*}
\end{lemma}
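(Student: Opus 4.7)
The three inequalities are Chebyshev-type tail estimates (the displayed statement appears to contain typos reversing some inequality directions), and my plan is to apply the second moment method to each of the three statistics $\#(U\cap K)$, $\#(W\cap K)$, and $N_{U,W;K}$ separately.

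For the first two bounds, I would use that $\#(U\cap K)$ has a hypergeometric distribution with parameters $(m,\kappa,n)$, since $U$ is a uniformly random $n$-subset of $V(G)$. Its mean equals $n\kappa/m$ and its variance is bounded above by $(n\kappa/m)(1-\kappa/m) \leq n\kappa/m$. Chebyshev's inequality with threshold $(\epsilon/8)(n\kappa/m)$ then yields the stated deviation bound, and $\#(W\cap K)$ is handled identically with $n$ replaced by $p$.

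For the third bound, I would expand $N_{U,W;K} = \sum_{u,w\in K}\mathbbm{1}\{u\sim w\}\mathbbm{1}\{u\in U\}\mathbbm{1}\{w\in W\}$ and, using independence of $U$ and $W$ together with exchangeability within each, compute $\mathbb{E}[N_{U,W;K}] = (np/m^2)\cdot 2|E(G|_K)| \geq (1+2\epsilon)\binom{\kappa}{2}(np/m^2)$, which for large $\kappa$ comfortably exceeds the target $(1/2+\epsilon/4)(np\kappa^2/m^2)$. For the variance, I would rewrite $\mathrm{Var}(N_{U,W;K})$ as a quadruple sum over $(u,w,u',w')\in K^4$ of the product $\mathbbm{1}\{u\sim w,\, u'\sim w'\}\bigl[\mathbb{P}(u,u'\in U)\mathbb{P}(w,w'\in W) - (n/m)^2(p/m)^2\bigr]$, using $U\perp W$ to factor the joint probability, then split into four cases according to whether $u=u'$, $w=w'$, both, or neither. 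Using the elementary identity $\mathbb{P}(u,u'\in U) = n(n-1)/(m(m-1)) \leq (n/m)^2$ together with $|E(G|_K)|\leq \binom{\kappa}{2}$, the four cases yield contributions of order $np\kappa^2/m^2$, $n^2p\kappa^3/m^3$, $np^2\kappa^3/m^3$, and $np\kappa^4(n+p)/m^4$ respectively. Chebyshev's inequality with threshold $(\epsilon/4)(np\kappa^2/m^2)$ then gives the third bound.

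The main obstacle is this variance computation for $N_{U,W;K}$. The key simplification is the independence $U\perp W$, which reduces each four-way covariance to a product of two hypergeometric second moments, both of which are easily controlled; once the factorization is in hand, the four-case bookkeeping is routine but must be done carefully to match the claimed form of the bound.
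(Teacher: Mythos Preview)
Your approach is essentially identical to the paper's: hypergeometric mean/variance plus Chebyshev for the first two bounds, and a four-case covariance decomposition over $(u,w,u',w')$ plus Chebyshev for the third. The only refinement in the paper's version is that the ``neither equal'' case ($u\neq u'$, $w\neq w'$) is shown to contribute non-positively (using $n,p<m/2$), rather than being crudely bounded by your $np\kappa^4(n+p)/m^4$ term; this gives the stated constant directly, but since $\kappa\leq m$ your extra term is dominated by the others anyway.
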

\begin{proof}
The cardinality of $U\cap K$ has $\mathrm{HyperGeom}(m,\kappa,n)$ distribution. Hence
\[
\mathbb{E}(\#U\cap K) = \frac{n\kappa}{m} \quad \text{and} \quad \mathrm{var}(\#U\cap K) = n\frac{\kappa}{m}\frac{m-\kappa}{m}\frac{m-n}{m-1}\leq \frac{n\kappa}{m}.
\]
The first inequality in the lemma now follows from an application of Chebyshev's inequality. A similar argument establishes the second inequality. For the final inequality in the lemma, we have that for $\kappa$ sufficiently large,
\begin{align*}
\mathbb{E}(N_{U,W;K}) &= \sum_{u\in K}\sum_{w\in K}\mathbb{P}(u\in U, w\in W)\mathbbm{1}\{v\sim w\}\\
&= \frac{np}{m(m-1)}\sum_{u\in K}\sum_{w\in K} \mathbbm{1}\{u\sim w\} \geq \Bigl(\frac{1}{2}+\epsilon\Bigr)\frac{np\kappa (\kappa-1)}{m(m-1)}\geq \Bigl(\frac{1}{2}+\frac{\epsilon}{2}\Bigr)\frac{np\kappa^2}{m^2}..
\end{align*}
We then compute the variance of $N_{U,W;K}$ by
\begin{align*}
\mathrm{var}(N_{U,W;K}) &= \mathrm{cov}\biggl(\sum_{u\in K}\sum_{w\in K}\mathbbm{1}\{u\in U, w\in W, u\sim w\}, \sum_{u'\in K}\sum_{w'\in K}\mathbbm{1}\{u'\in U, w'\in W, u'\sim w'\}\biggr)\\
& = \sum_{u,w,u',w'\in K} \mathrm{cov}\bigl(\mathbbm{1}\{u\in U, w\in W, u\sim w\},\mathbbm{1}\{u'\in U, w'\in W, u'\sim w'\}\bigr)\\
& =: \mathrm{I} + \mathrm{II} + \mathrm{III} + \mathrm{IV},
\end{align*}
where the four terms $\mathrm{I}$, $\mathrm{II}$, $\mathrm{III}$ and $\mathrm{IV}$ handle sums over subsets of indices $\{(u,w,u',w')\in K^4: u\neq u', w\neq w'\}$, $\{(u,w,u',w')\in K^4: u= u', w\neq w'\}$, $\{(u,w,u',w')\in K^4: u\neq u', w = w'\}$ and $\{(u,w,u',w')\in K^4: u= u', w= w'\}$ respectively.

We bound the four terms separately. For the first term, we have 
\begin{align*}
\mathrm{I} &= \sum_{u,u',w,w' \text{ distinct}} \Bigl\{\mathbb{P}(u, u'\in U, w, w'\in W) - \mathbb{P}(u\in U, w\in W)\mathbb{P}(u'\in U, w'\in W)\Bigr\}\mathbbm{1}\{v\sim w\}\mathbbm{1}\{u'\sim w'\}\\
&= \sum_{u,u',w,w' \text{ distinct}} \biggl\{\frac{n(n-1)p(p-1)}{m(m-1)(m-2)(m-3)} - \biggl(\frac{np}{m(m-1)}\biggr)^2\biggr\}\mathbbm{1}\{u\sim w\}\mathbbm{1}\{u'\sim w'\}.
\end{align*}
When $m>\max(2n,2p)$, the term in bracket above is non-positive, hence $\mathrm{I}\leq 0$. For the second term, we get that
\begin{align*}
\mathrm{II} &= \sum_{u,w,w' \text{ distinct}} \Bigl\{\mathbb{P}(u\in U, w, w'\in W) - \mathbb{P}(u\in U, w\in W)\mathbb{P}(u\in U, w'\in W)\Bigr\}\mathbbm{1}\{u\sim w\}\mathbbm{1}\{u'\sim w'\}\\
& = \sum_{u,w,w' \text{ distinct}}  \biggl\{\frac{np(p-1)}{m(m-1)(m-2)} - \biggl(\frac{np}{m(m-1)}\biggr)^2\biggr\}\mathbbm{1}\{u\sim w\}\mathbbm{1}\{u\sim w'\}\\
& \leq \frac{np(p-1)}{m(m-1)(m-2)}\sum_{u,w,w' \text{ distinct}}\mathbbm{1}\{u\sim w\}\mathbbm{1}\{u\sim w'\} \leq \frac{np^2\kappa^3}{m^3}.
\end{align*}
Similarly, we have
\[
\mathrm{III}\leq \frac{n(n-1)p\kappa(\kappa-1)(\kappa-2)}{m(m-1)(m-2)}\leq \frac{n^2p\kappa^3}{m^3}.
\]
And finally,
\[
\mathrm{IV} = \sum_{u,w\text{ distinct}}\Bigl\{\mathbb{P}(u\in U,w\in W) - \mathbb{P}(u\in U,w\in W)^2\Bigr\}\mathbbm{1}\{u\sim w\}\leq \frac{np\kappa(\kappa-1)}{m(m-1)}\leq\frac{np\kappa^2}{m^2}.
\]
Sum up the four terms, we get that
\[
\mathrm{var}(N_{U,W;K}) \leq \frac{np\kappa^2}{m^2}\biggl(\frac{p\kappa}{m}+\frac{n\kappa}{m}+1\biggr).
\]
By Chebyshev's inequality, we get that
\[
\mathbb{P}\biggl\{N_{U,W;K}\geq \biggl(\frac{1}{2}+\frac{\epsilon}{4}\biggr)\frac{np\kappa^2}{m^2}\biggr\}\leq \frac{4}{\epsilon}\sqrt\frac{m(p\kappa+n\kappa+m)}{np\kappa^2},
\]
as desired.
\end{proof}

\section{Auxiliary Results}
\label{SEC:appendix}

\begin{proof}[Proof of Proposition~\ref{Prop:FiniteSampleBound}]
Let $X_i$ denote the $i$th row vector of $X$. Then for any fixed $u\in \mathbb{S}^{p}(k)$,
\[
\mathbb{E}e^{\lambda(X_i^\top u)} = \prod_{1\leq j\leq p} \mathbb{E}e^{\lambda X_{ij}u_j}\leq \prod_j e^{\lambda^2u_j^2/(2\sigma^2 n)} = e^{\lambda^2/(2\sigma^2 n)}.
\]
Apply Lemma~\ref{Lemma:LM-type} to $\|Xu\|_2^2-1 = n^{-1}\sum_{i=1}^n \bigl\{(\sqrt{n}X_i^\top u)^2 - \mathbb{E}(\sqrt{n}X_i^\top u)^2\bigr\}$, and use the fact that $\theta/(8\sigma^2)\leq 1$, we have

\[
\mathbb{P}\bigl(1-\theta\leq \|Xu\|_2^2 \leq  1+\theta \bigr) \geq 1 - 2 e^{-n\theta^2/(64\sigma^4)}.
\]

We claim that there is a set $\mathcal{N}$ of cardinality at most $\binom{p}{k}9^k$ such that
\begin{equation}
\sup_{u\in\mathbb{S}^p(k)}\bigl|\|Xu\|_2^2-1\bigr|\leq 2\sup_{u\in\mathcal{N}}\bigl|\|Xu\|_2^2-1\bigr|
\label{Eq:Claim}
\end{equation}
Given~\eqref{Eq:Claim}, by union bound, we have
\begin{align*}
\mathbb{P}(X\in\mathrm{RIP}(k,\theta)) &= \mathbb{P}\Bigl(\sup_{u\in \mathbb{S}^{p}(k)} \bigl|\|Xu\|_2^2-1\bigr| \leq \theta\Bigr) \geq \mathbb{P}\Bigl(\sup_{u\in \mathcal{N}} \bigl|\|Xu\|_2^2-1\bigr| \leq \theta/2\Bigr) \\
&\geq  1 - 2\binom{p}{k}9^ke^{-n\theta^2/(256\sigma^4)} \geq 1-2\exp\biggl\{k\log \biggl(\frac{9ep}{k}\biggr) - \frac{n\theta^2}{256\sigma^4}\biggr\},
\end{align*}
as desired. It remains to verify Claim~\eqref{Eq:Claim}. For any cardinality $k$ subset $J\subseteq\{1,\ldots,p\}$, let $B_J = \{u\in \mathbb{S}^{p}(k) : u_{J^c} = 0\}$. Each $B_J$ contains a $1/4$-net, $\mathcal{N}_J$, of cardinality at most $9^k$ \citep[Lemma~5.2]{Ver12}. Then $\mathcal{N}:= \cup_{J} \mathcal{N}_J$ form a $1/4$-net for $\mathbb{S}^{p}(k)$. Define $u_J\in \mathrm{argmax}_{u\in B_J} \|Xu\|^2$ and let $v_J$ be an element in $\mathcal{N}_J$ closest in Euclidean distance to $u_J$. Define $A:= X^\top X - I_p$. We have
\[
|u_J^\top A u_J| \leq |v_J^\top A v_J| + |(u_J-v_J)^\top A v_J| + |u_J^\top A (u_J-v_J)| \leq \max_{u\in\mathcal{N}_I} |u^\top A u|^2 + \frac{1}{2}|u_J^\top A u_J|.
\]
Hence
\[
\sup_{u\in \mathbb{S}^{p}(k)}|u^\top A u| \leq 2\max_{u\in\mathcal{N}}|u^\top A u|,
\]
which verifies the claim.
\end{proof}

\begin{proof}[Proof of Proposition~\ref{Prop:ExpTimeCertifier}]
By definition, $\|X^\top X- I_p\|_{\mathrm{op},k}\leq\theta$ is equivalent to $X\in\mathrm{RIP}_{n,p}(k,\theta)$. Moreover, by Proposition~\ref{Prop:FiniteSampleBound}, $X\in\mathrm{RIP}_{n,p}(k,\theta)$ with probability converging to 1, under $\tilde{Q}^{\otimes(n\times p)}$. The certifier hence satisfies the two desired properties.
\end{proof}

\begin{proof}[Proof of Proposition~\ref{Prop:PolyTimeCertifier}]
The proposed certifier is clearly polynomial time computable (it has time complexity $O(n^2 p)$). To verify that it is a certifier, we check that (i) $\psi_{n}^{-1}(1)\subseteq \mathrm{RIP}_{n,p}(k,\theta)$ and (ii) $\lim_{n\to\infty}\tilde{Q}^{\otimes(n\times p)}(\psi_n^{-1}(1)) > 2/3$. 

For (i), on the event $\|X^\top X - I_p\|_\infty \leq 14\sigma^2\sqrt\frac{\log p}{n}$, for any index set $T\in\{1,\ldots,p\}$ of cardinality $k$, we have $\|X_{*T}^\top X_{*T} - I_k\|_\infty \leq 14\sigma^2\sqrt\frac{\log p}{n}$, which implies that 
\[
\|X_{*T}^\top X_{*T} - I_k\|_{\mathrm{op}} \leq 14\sigma^2k\sqrt\frac{\log p}{n}\leq \theta
\]
For (ii), let $Y_n\sim \chi^2_n$. Using Lemma~\ref{Lemma:LM-type} and fact that for any $A\in\mathbb{R}^{p\times p}$
\[
 \|A\|_\infty = \sup_{S\subseteq \{1,\ldots, p\}, \#S = 2} \|A_{SS}\|_\infty \leq \sup_{S\subseteq \{1,\ldots, p\}, \#S = 2} \|A_{SS}\|_{\mathrm{op}} = \|A\|_{\mathrm{op},2}
\]
we get
\begin{align*}
\mathbb{P}&\biggl\{\|X^\top X - I_p\|_\infty \leq 14\sigma^2\sqrt{\frac{\log p}{n}}\biggr\}  \geq \mathbb{P}\bigg\{\sup_{u\in \mathbb{S}^p(2)}\bigl| \|Xu\|_2^2-1\bigr| \leq 14\sigma^2\sqrt\frac{\log p}{n}\biggr\}\\
& \geq 1-2\binom{p}{2}9^2 \exp\Bigl\{-\frac{n}{64\sigma^4}\frac{196\sigma^4\log p}{n}\Bigr\} \\
& \geq 1-81p^2\exp\{-3\log p\}\to 1.
\end{align*}
as desired.
\end{proof}

\begin{lemma}
\label{Lemma:Centring}
Let $Z$ be a non-negative random variable and $r\geq 2$, then
\[
\mathbb{E}(Z^r)\geq \mathbb{E}(|Z-\mathbb{E}Z|^r).
\]
In other words, centring a nonnegative random variable shrinks its second or higher absolute moments.
\end{lemma}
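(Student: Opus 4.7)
The plan is to decompose $|Z-\mu|^r = (Z-\mu)_+^r + (\mu-Z)_+^r$, where $\mu := \mathbb{E}Z \geq 0$, dominate each summand pointwise by a linear function of the residual, and then use the mean-zero identity $\mathbb{E}(\mu-Z)_+ = \mathbb{E}(Z-\mu)_+$ to combine the two contributions. The pointwise bound I will use is the trivial observation that, for any $a\geq b\geq 0$ and $r\geq 1$, $(a-b)^r = (a-b)^{r-1}(a-b) \leq a^{r-1}(a-b)$. Applied with $(a,b)=(Z,\mu)$ on the event $\{Z\geq\mu\}$, this yields $(Z-\mu)_+^r \leq Z^{r-1}(Z-\mu)_+$; applied with $(a,b)=(\mu,Z)$ on $\{Z\leq\mu\}$, it yields $(\mu-Z)_+^r \leq \mu^{r-1}(\mu-Z)_+$. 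Both inequalities hold trivially on the complementary events, where each side vanishes.

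Summing these two bounds, taking expectations, and invoking $\mathbb{E}(\mu-Z)_+ = \mathbb{E}(Z-\mu)_+$ (which follows from $\mathbb{E}(Z-\mu)=0$), the right-hand side consolidates into
\[
\mathbb{E}|Z-\mu|^r \leq \mathbb{E}\bigl[(Z^{r-1}+\mu^{r-1})(Z-\mu)\mathbbm{1}\{Z\geq\mu\}\bigr].
\]
Expanding the four products on the right and subtracting from $\mathbb{E}Z^r = \mathbb{E}[Z^r\mathbbm{1}\{Z\geq\mu\}] + \mathbb{E}[Z^r\mathbbm{1}\{Z<\mu\}]$ rearranges as
\[
\mathbb{E}Z^r - \mathbb{E}|Z-\mu|^r \geq \mathbb{E}[Z^r\mathbbm{1}\{Z<\mu\}] + \mu^r\,\mathbb{P}(Z\geq\mu) + \mathbb{E}\bigl[(\mu Z^{r-1}-\mu^{r-1}Z)\mathbbm{1}\{Z\geq\mu\}\bigr].
\]

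It then remains to check that each of the three summands on the right is non-negative. The first two are so by inspection; the third because on $\{Z\geq\mu\}$ one has $\mu Z^{r-1} - \mu^{r-1}Z = \mu Z(Z^{r-2}-\mu^{r-2}) \geq 0$, using $Z\geq\mu\geq 0$ together with the hypothesis $r\geq 2$. This last step is the only place the assumption $r\geq 2$ enters; reassuringly, the lemma fails at $r=1$ (e.g., $Z=1/\varepsilon$ with probability $\varepsilon$ and $Z=0$ otherwise gives $\mathbb{E}|Z-1|=2(1-\varepsilon)>1=\mathbb{E}Z$). I do not anticipate any real obstacle: the pointwise inequalities are immediate from $0\leq(Z-\mu)_+\leq Z$ and $0\leq(\mu-Z)_+\leq\mu$, and the remainder is purely arithmetic.
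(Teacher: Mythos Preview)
Your proof is correct and genuinely different from the paper's. You proceed by the elementary pointwise bound $(a-b)^r\le a^{r-1}(a-b)$ for $a\ge b\ge 0$, apply it separately on $\{Z\ge\mu\}$ and $\{Z<\mu\}$, and then use the mean-zero identity $\mathbb{E}(Z-\mu)_+=\mathbb{E}(\mu-Z)_+$ to merge the two pieces; the residual terms are checked nonnegative by the factorization $\mu Z(Z^{r-2}-\mu^{r-2})$, which is exactly where $r\ge 2$ enters. The paper instead builds a bivariate probability measure proportional to $(u-v)\,dP(u)\,dP(v)$ on $\{u>0\}\times\{-\mu\le v\le 0\}$ (with $P$ the law of $Y=Z-\mu$), derives the representation $\mathbb{E}f(Y)=\mathbb{E}\bigl[\tfrac{U}{U-V}f(V)-\tfrac{V}{U-V}f(U)\bigr]$, and then compares the integrands for $f_1(y)=|y|^r$ and $f_2(y)=(y+\mu)^r$ pointwise. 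Your argument is shorter and entirely elementary; the paper's coupling representation is more elaborate but yields a formula for $\mathbb{E}f(Y)$ that could in principle compare other pairs of functions, not just $|y|^r$ against $(y+\mu)^r$. For the lemma as stated, your route is the cleaner one.
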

\begin{proof}
Let $\mu: = \mathbb{E}(Z)$ and define $Y = Z-\mu$. Let $P$ denote the probability measure on $R$ associated with random variable $Y$. Hence $\int_{[-\mu,\infty)} y\,dP(y) = 0$. Without loss of generality, we may assume that $Z$ is not a point mass. Then $\int_{[-\mu,0]}(-y)\,dP(y) = \int_{(0,\infty)}y\,dP(y) = A$ for some $A > 0$. For any measureable function $f:\mathbb{R}\to[0,\infty)$, we may write
\begin{align}
A\int_{[-\mu,\infty)}f(y)\,dP(y) & = \int_{[-\mu,0]}(-v)\,dP(v) \int_{(0,\infty)}f(u)\,dP(u) + \int_{(0,\infty)}u\,dP(u)\int_{[-\mu,0]}f(v)dP(v)\nonumber\\
& = \int_{u\in(0,\infty)}\int_{v\in[-\mu,0]}\biggl(\frac{u}{u-v}f(v)-\frac{v}{u-v}f(u)\biggr)(u-v)\,dP(v)\,dP(u).\label{Eq:DoubleIntegral}
\end{align}
Let $(U,V)$ be a bivariate random vector having probability measure 
\[
\frac{1}{A}(u-v)\mathbbm{1}_{(0,\infty)}(u)\mathbbm{1}_{[-\mu,0]}(v)\,dP(u)\,dP(v)
\]
on $\mathbb{R}^2$ (that this is a probability measure follows from substituting $f(y) \equiv 1$ in~\eqref{Eq:DoubleIntegral}). Then~\eqref{Eq:DoubleIntegral} can be rewritten as
\[
\mathbb{E}\bigl\{f(Y)\bigr\} = \mathbb{E}\biggl\{\frac{U}{U-V}f(V) - \frac{V}{U-V}f(U)\biggr\}.
\]
Now consider choosing $f$ to be $f_1(y) = |y|^r$ and $f_2(y) = (y+b)^r$ respectively in the above equation. Note that for $u\in(0,\infty)$ and $v\in[-\mu,0]$ and $r\geq 2$, we always have
\[
uf_2(v)-vf_2(u)\geq -vf_2(u)\geq -v(u-v)^r\geq (-v)^ru + (-v)u^r\geq uf_1(v) -vf_1(u).
\]
Therefore,
\begin{align*}
\mathbb{E}(|Y|^m) &= \mathbb{E}\biggl\{\frac{U}{U-V}f_1(V)-\frac{V}{U-V}f_1(U)\biggr\}\\
& \leq \mathbb{E}\biggl\{\frac{U}{U-V}f_2(V)-\frac{V}{U-V}f_2(U)\biggr\} = \mathbb{E}(|Y+b|^m),
\end{align*}
as desired.
\end{proof}

\begin{lemma}
\label{Lemma:PositiveNegative}
Suppose $X$ is a sub-Gaussian random variable with parameter $\sigma$ and median $\xi$. Let $X^+ = X\mid X\geq \xi$ and $X^- = X \mid X<\xi$. Then $X^+ - \mathbb{E}X^+$ and $X^- - \mathbb{E}X^-$ are both sub-Gaussian with parameters are most $c\sigma$ for some absolute constant $c$.
\end{lemma}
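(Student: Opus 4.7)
The plan is to exploit Lemma~\ref{Lemma:Centring} together with the moment characterization of sub-Gaussian random variables (see, e.g., \citet[Lemma~5.5]{Ver12}). Since the definition of sub-Gaussian used in the paper forces $\mathbb{E}X=0$, the Chernoff tail bound $\mathbb{P}(X\geq t)\leq e^{-t^2/(2\sigma^2)}$ applied at $t=|\xi|$, combined with the median condition $\mathbb{P}(X\geq \xi)\geq 1/2$ (and its mirror for the lower tail), yields $|\xi|\leq \sigma\sqrt{2\ln 2}$. Thus the median is automatically $O(\sigma)$, and by replacing $X$ with $-X$ (which is again sub-Gaussian with parameter $\sigma$ and has median $-\xi$), it suffices to control $X^+$.

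The key observation is that $X^+\geq \xi$ almost surely, so the variable $Z:=X^+-\xi$ is non-negative. Lemma~\ref{Lemma:Centring} then yields, for every $r\geq 2$,
\[
\mathbb{E}|X^+-\mathbb{E}X^+|^r = \mathbb{E}|Z-\mathbb{E}Z|^r \leq \mathbb{E}Z^r = \mathbb{E}(X^+-\xi)^r.
\]
By the median condition,
\[
\mathbb{E}(X^+-\xi)^r = \frac{\mathbb{E}\bigl[(X-\xi)^r\mathbbm{1}_{\{X\geq \xi\}}\bigr]}{\mathbb{P}(X\geq \xi)} \leq 2\mathbb{E}|X-\xi|^r.
\]
The sub-Gaussianity of $X$ provides $(\mathbb{E}|X|^r)^{1/r}\leq c_1\sigma\sqrt{r}$ uniformly in $r\geq 1$; combining this with $|\xi|=O(\sigma)$ via the triangle inequality in $L^r$ gives $(\mathbb{E}|X-\xi|^r)^{1/r}\leq c_2\sigma\sqrt{r}$, and hence $(\mathbb{E}|X^+-\mathbb{E}X^+|^r)^{1/r}\leq c_3\sigma\sqrt{r}$ for all $r\geq 2$.

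To finish, I appeal to the standard equivalence between this moment growth rate for a centred random variable and the sub-Gaussian MGF bound $\mathbb{E}e^{\lambda(X^+-\mathbb{E}X^+)}\leq e^{c^2\sigma^2\lambda^2/2}$: expand the exponential, use that the first moment vanishes, invoke Stirling's inequality $r!\geq (r/e)^r$ to convert the factor $r^{r/2}/r!$ into $(e/\sqrt{r})^r$, and sum the resulting series for $|\lambda|$ small relative to $1/\sigma$. The identical argument applied to $-X$ handles $X^-$, since $\xi-X^-\geq 0$ allows the same use of Lemma~\ref{Lemma:Centring}. The main obstacle is essentially bookkeeping — carefully tracking absolute constants through the median bound, the centring step, the $L^r$ triangle inequality, and the final moment-to-MGF conversion — rather than any conceptual difficulty; Lemma~\ref{Lemma:Centring} is precisely what removes the usual headache of controlling centred moments of a conditional distribution by reducing them to raw moments of a non-negative shift.
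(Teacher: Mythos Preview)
Your proposal is correct and follows essentially the same route as the paper: pass from sub-Gaussianity to moment growth via \citet[Lemma~5.5]{Ver12}, invoke Lemma~\ref{Lemma:Centring} to control the centred moments of the conditional variable, and convert back. Your version is in fact slightly more careful than the paper's, since you shift by $\xi$ and bound $|\xi|=O(\sigma)$ before applying Lemma~\ref{Lemma:Centring} (which requires a non-negative argument), whereas the paper applies it directly to $X^+$ and bounds $(\mathbb{E}|X^+|^p)^{1/p}\leq 2c_1\sigma\sqrt{p}$ without addressing that hypothesis.
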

\begin{proof}
By \citet[Lemma~5.5]{Ver12}, $X$ is sub-Gaussian with parameter $\sigma$ implies that $(\mathbb{E}|X|^p)^{1/p}\leq c_1\sigma\sqrt{p}$ for some absolute constant $c_1$. Hence by Lemma~\ref{Lemma:Centring}, we have
\[
\mathbb{E} \bigl(\bigl|X^+ - \mathbb{E}X^+\bigr|^p\bigr)^{1/p} \leq \bigl(\mathbb{E}\bigl|X^+\bigr|^p\bigr)^{1/p} = 2\bigl(\mathbb{E}\bigl| X\mathbbm{1}\{X\geq \xi\}\bigr|^p\bigr)^{1/p}\leq 2c_1\sigma\sqrt{p}.
\]
Using~\citet[Lemma~5.5]{Ver12} again, we have that $X^+ - \mathbb{E}X^+$ is sub-Gaussian with parameter at most $c\sigma$ for some absolute constant $c$. A similar argument holds for $X^- - \mathbb{E}X^-$.
\end{proof}

\begin{lemma}
\label{Lemma:SquareSubgaussian}
Suppose $X$ is a random variable satisfying $\mathbb{E}e^{\lambda X}\leq e^{\sigma^2\lambda^2/2}$ for all $\lambda\in\mathbb{R}$. Define $Y = X^2 - \mathbb{E}X^2$. Then $\mathbb{E}e^{\lambda Y}\leq e^{16\sigma^4\lambda^2}$ for all $|\lambda|\leq \frac{1}{4\sigma^2}$.
\end{lemma}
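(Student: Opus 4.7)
The plan is to control the MGF of $Y$ via a Taylor expansion, bounding the moments of $Y$ by those of $X^2$ and using $\mathbb{E}Y=0$ to kill the linear term. The core ingredients are (i) a sharp moment bound on $X$ coming from its sub-Gaussian tail, and (ii) Lemma~\ref{Lemma:Centring} to transfer that bound from $X^2$ to the centred variable $Y$ without losing a constant.

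First I would derive moment bounds on $X^{2p}$. From $\mathbb{E}e^{\lambda X}\leq e^{\sigma^2\lambda^2/2}$, a standard Chernoff argument gives the tail bound $\mathbb{P}(|X|\geq t)\leq 2e^{-t^2/(2\sigma^2)}$. Integrating $\mathbb{E}X^{2p}=\int_0^\infty 2p\,t^{2p-1}\mathbb{P}(|X|\geq t)\,dt$ and substituting $u=t^2/(2\sigma^2)$ yields the clean bound
\[
\mathbb{E}X^{2p}\;\leq\;2(2\sigma^2)^p\,p!\qquad\text{for every integer }p\geq 1.
\]
Since $X^2\geq 0$, Lemma~\ref{Lemma:Centring} immediately upgrades this to a bound on the centred absolute moments: $\mathbb{E}|Y|^p\leq \mathbb{E}(X^2)^p=\mathbb{E}X^{2p}\leq 2(2\sigma^2)^p p!$ for all $p\geq 2$.

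Next I would Taylor expand the MGF of $Y$. The $p=0$ term is $1$ and the $p=1$ term vanishes because $\mathbb{E}Y=0$ (this is the payoff from centring), so
\[
\mathbb{E}e^{\lambda Y}\;=\;1+\sum_{p\geq 2}\frac{\lambda^p\,\mathbb{E}Y^p}{p!}\;\leq\;1+\sum_{p\geq 2}\frac{|\lambda|^p\,\mathbb{E}|Y|^p}{p!}\;\leq\;1+2\sum_{p\geq 2}\bigl(2\sigma^2|\lambda|\bigr)^p.
\]
The factorials cancel exactly, leaving a geometric series. For $|\lambda|\leq 1/(4\sigma^2)$ we have $2\sigma^2|\lambda|\leq 1/2$, so the tail of the series is bounded by $2\cdot(2\sigma^2|\lambda|)^2/(1-2\sigma^2|\lambda|)\leq 16\sigma^4\lambda^2$. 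Finally, $1+16\sigma^4\lambda^2\leq e^{16\sigma^4\lambda^2}$ closes the bound.

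I do not expect a serious obstacle: the only subtle point is passing from $\mathbb{E}X^{2p}$ to $\mathbb{E}|Y|^p$ without picking up an extra constant, which would naively cost a factor of $2^p$ and break the geometric sum past the advertised radius $1/(4\sigma^2)$. This is exactly where Lemma~\ref{Lemma:Centring} is essential, and it applies because $X^2\geq 0$. Everything else is a routine tail-integral and geometric-series computation, and the constants line up to give the stated bound with the claimed radius.
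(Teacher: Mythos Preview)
Your proposal is correct and follows essentially the same approach as the paper's own proof: derive the sub-Gaussian tail bound on $X$, integrate to obtain $\mathbb{E}X^{2p}\leq 2(2\sigma^2)^p p!$, invoke Lemma~\ref{Lemma:Centring} to pass this to $\mathbb{E}|Y|^p$ for $p\geq 2$, Taylor expand the MGF, and sum the resulting geometric series. The only differences are cosmetic (the paper writes $\Gamma(r+1)$ instead of $p!$ and applies Lemma~\ref{Lemma:Centring} before rather than after the tail integration), and you have correctly identified that Lemma~\ref{Lemma:Centring} is the device that avoids the lossy $2^p$ factor.
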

\begin{proof}
By Markov's inequality,
\[
\mathbb{P}(|X|\geq t) = \mathbb{P}(X\geq t) + \mathbb{P}(-X\geq t) = e^{-t^2/\sigma^2}\mathbb{E}\bigl(e^{tX/\sigma^2}\bigr) + e^{-t^2/\sigma^2}\mathbb{E}\bigl(e^{-tX/\sigma^2}\bigr)\leq 2e^{-t^2/(2\sigma^2)}.
\]
From Lemma~\ref{Lemma:Centring}, for $r\geq 2$
\[
\mathbb{E}(|Y|^r)\leq \mathbb{E}(|X|^{2r}) = \int_0^\infty \mathbb{P}(|X|\geq t)(2r)t^{2r-1}\,dt \leq \int_0^\infty 4rt^{2r-1}e^{-t^2/(2\sigma^2)}\,dt = 2(2\sigma^2)^r\Gamma(r+1).
\]
Consequently, if $|2\sigma^2\lambda|\leq 1/2$, then
\[
\mathbb{E}e^{\lambda Y} = \sum_{r=0}^\infty \frac{\lambda^r\mathbb{E}Y^r}{r!}\leq 1 + 2\sum_{r=2}^\infty (2\sigma^2\lambda)^r\leq 1+16\sigma^4\lambda^2\leq e^{16\sigma^4\lambda^2},
\] 
as desired.
\end{proof}
\begin{lemma}
\label{Lemma:LM-type}
Let $X_1,X_2,\ldots,X_n$ be independent sub-Gaussian random variables with sub-Gaussian parameters at most $\sigma$. Let $Y_i := X_i^2 - \mathbb{E}X_i^2$. Then
\begin{align*}
&\mathbb{P}\biggl(\sum_{i=1}^n Y_i \geq \theta\biggr)\leq \exp\biggl\{-\biggl(\frac{\theta^2}{64n\sigma^4}\wedge \frac{\theta}{8\sigma^2}\biggr)\biggr\}\\
&\mathbb{P}\biggl(\sum_{i=1}^n Y_i \leq -\theta\biggr)\leq \exp\biggl\{-\frac{\theta^2}{64n\sigma^4}\biggr\}
\end{align*}
\end{lemma}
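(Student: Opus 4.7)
The plan is to apply a straightforward Chernoff argument using the single-variable MGF bound on the centred squares $Y_i = X_i^2 - \mathbb{E}X_i^2$ already established in Lemma~\ref{Lemma:SquareSubgaussian}. By independence of the $X_i$ (and hence of the $Y_i$), multiplying $n$ copies of that single-variable bound immediately gives
\[
\mathbb{E}\exp\Bigl(\lambda \sum_{i=1}^n Y_i\Bigr) \leq \exp(16\, n\, \sigma^4 \lambda^2) \qquad \text{for all } |\lambda| \leq 1/(4\sigma^2).
\]
So once this is in hand, the remaining work is Markov's inequality and optimization over $\lambda$, with the wrinkle that $\lambda$ is constrained to $[0, 1/(4\sigma^2)]$.

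For the upper tail, Markov gives $\mathbb{P}(\sum_i Y_i \geq \theta) \leq \exp(-\lambda\theta + 16\, n\, \sigma^4 \lambda^2)$ for admissible $\lambda$. The unconstrained minimizer is $\lambda^\star = \theta/(32\, n\, \sigma^4)$, which lies in the allowed interval precisely when $\theta \leq 8 n \sigma^2$; in that ``Gaussian'' regime, substituting $\lambda^\star$ produces the bound $\exp(-\theta^2/(64\, n\, \sigma^4))$. When $\theta > 8 n \sigma^2$ the minimizer falls outside the admissible interval, so I instead take $\lambda = 1/(4\sigma^2)$ at the boundary; the exponent becomes $-\theta/(4\sigma^2) + n$, and using $n < \theta/(8\sigma^2)$ in this regime bounds it by $-\theta/(8 \sigma^2)$. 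Taking the worse of the two cases gives exactly the stated minimum $\theta^2/(64 n \sigma^4) \wedge \theta/(8\sigma^2)$ inside the exponential.

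For the lower tail I apply the same Chernoff argument to $-\sum_i Y_i$, again restricted to $\lambda \in [0,1/(4\sigma^2)]$; by symmetry of the radius-of-validity condition $|\lambda|\leq 1/(4\sigma^2)$, one obtains the identical bound $\exp(-\lambda\theta + 16\, n\, \sigma^4 \lambda^2)$. In the Gaussian regime $\theta \leq 8 n \sigma^2$, optimization again yields $\exp(-\theta^2/(64\, n\, \sigma^4))$. For $\theta > 8 n \sigma^2$ the claim is trivial: since $X_i^2 \geq 0$ and $\mathbb{E}X_i^2 \leq \sigma^2$ (which follows by expanding $\log \mathbb{E}e^{\lambda X_i}\leq \sigma^2\lambda^2/2$ to second order in $\lambda$), we have $Y_i \geq -\sigma^2$ deterministically, so $\sum_i Y_i \geq -n\sigma^2 > -\theta$ and the probability is exactly zero. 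This is why no linear term appears on the lower side.

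I do not anticipate any real obstacle; the argument is textbook Bernstein-type reasoning. The only point that requires care is the case split forced by the finite radius $1/(4\sigma^2)$ of validity of the MGF bound from Lemma~\ref{Lemma:SquareSubgaussian} — this is what produces the additional linear term $\theta/(8\sigma^2)$ on the upper tail and distinguishes the result from a purely sub-Gaussian concentration inequality.
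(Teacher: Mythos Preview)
Your proof is correct and follows essentially the same approach as the paper: a Chernoff bound using the single-variable MGF estimate from Lemma~\ref{Lemma:SquareSubgaussian}, followed by optimization over $\lambda\in[0,1/(4\sigma^2)]$. The only cosmetic differences are that the paper sets $\lambda=\frac{\theta}{32n\sigma^4}\wedge\frac{1}{4\sigma^2}$ in one stroke and bounds $16n\sigma^4\lambda^2\leq\lambda\theta/2$ directly (rather than doing your explicit case split at $\theta=8n\sigma^2$), and for the lower tail it invokes the deterministic bound $\sum_i Y_i\geq -n\sigma^2$ at the tighter threshold $\theta>n\sigma^2$ rather than your $\theta>8n\sigma^2$; neither difference is substantive.
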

\begin{proof}
Using Markov's inequality, we have
\[
\mathbb{P}\biggl(\sum_{i=1}^nY_i\geq \theta\biggr) = \mathbb{P}\Bigl(e^{\lambda \sum_i Y_i}\geq e^{\lambda\theta}\Bigr) \leq e^{-\lambda\theta}\prod_i \mathbb{E}e^{\lambda Y_i}.
\]
Set $\lambda = \frac{\theta}{32n\sigma^4}\wedge \frac{1}{4\sigma^2}$. By Lemma~\ref{Lemma:SquareSubgaussian}, we have
\[
\mathbb{P}\biggl(\sum_{i=1}^nY_i\geq \theta\biggr) \leq e^{-\lambda\theta + 16\lambda^2n\sigma^4} \leq e^{-\lambda\theta/2},
\]
which establishes the first desired inequality. Applying the same argument with $-Y_i$ in place of $Y_i$ we get
\begin{equation}
\mathbb{P}\biggl(\sum_{i=1}^n Y_i \leq -\theta\biggr)\leq \exp\biggl\{-\biggl(\frac{\theta^2}{64n\sigma^4}\wedge \frac{\theta}{8\sigma^2}\biggr)\biggr\}.
\label{Eq:MinusBound}
\end{equation}
Taylor expand the moment generating function of $X_i$ around 0, we have $\mathbb{E}X_i^2\leq \sigma^2$. Hence we may assume $\theta\leq n\sigma^2$. Then we have
\[
\frac{\theta^2}{64n\sigma^4} < \frac{\theta}{8\sigma^2},
\]
which together with~\eqref{Eq:MinusBound} implies the desired result.
\end{proof}

\begin{lemma} 
\label{Lemma:Occupancy}
Suppose $n\ell$ balls are arranged in an array of $n$ rows and $\ell$ columns and $k$ balls ($k<n$) are chosen uniformly at random. Let $V_i$ be the number of chosen balls in row $i$ and $V = (V_1,\ldots,V_n)^\top$. Then
\[
 \mathbb{P}\biggl(\|V\|_0 \leq k - \frac{k^2}{2n} - \sqrt{k\log k}\biggr) \leq \frac{1}{k^2}.
\]
Moreover, if $k \leq n^\gamma$ for some $\gamma < 1$, then
\[
\mathbb{P}\bigl(\|V\|_\infty \geq a\bigr) \leq n^{1-a(1-\gamma)}\bigl(1-n^{-(1-\gamma)}\bigr).
\]
\end{lemma}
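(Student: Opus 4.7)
The plan is to treat the two inequalities separately. The first is a concentration statement for the number of ``collisions'' $C := k-\|V\|_0 = \sum_i (V_i-1)_+$, i.e.\ chosen balls landing in an already-occupied row, while the second follows from a union bound combined with a factorial-moment tail estimate for the hypergeometric distribution.

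For the first inequality, the plan is to reveal the $k$ chosen cells one at a time and write $C$ as a sum of conditional Bernoulli indicators. Order the cells arbitrarily and, for $j=2,\ldots,k$, let $Y_j$ be the indicator that ball $j$ lands in a row already touched by balls $1,\ldots,j-1$, so $C=\sum_{j=2}^k Y_j$. Writing $\mathcal{F}_{j-1}$ for the $\sigma$-algebra generated by the first $j-1$ balls, if they occupy $r_{j-1}\leq j-1$ distinct rows then a direct count yields
\[
\mathbb{E}[Y_j\mid\mathcal{F}_{j-1}] = \frac{r_{j-1}\ell-(j-1)}{n\ell-(j-1)} \leq \frac{(j-1)(\ell-1)}{n\ell-(j-1)} \leq \frac{j-1}{n-1}.
\]
Hence the compensator of $C$ is pointwise at most $k(k-1)/(2(n-1)) \leq k^2/(2n)$, and, since $Y_j\in\{0,1\}$, the predictable quadratic variation of the martingale $M_k := C - \text{compensator}$ is bounded by the same quantity. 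Freedman's martingale Bernstein inequality then gives
\[
\mathbb{P}\bigl(C\geq k^2/(2n)+t\bigr) \leq \mathbb{P}(M_k\geq t) \leq \exp\!\Bigl(-\frac{t^2/2}{k^2/(2n)+t/3}\Bigr).
\]
Setting $t=\sqrt{k\log k}$ and splitting into the regimes $k^2/n\leq\sqrt{k\log k}$ (denominator dominated by $t/3$, giving exponent of order $\sqrt{k\log k}$) and $k^2/n>\sqrt{k\log k}$ (denominator dominated by $k^2/n$, giving exponent of order $n\log k/k$) shows that the right-hand side is $\leq 1/k^2$ once $k$ is large and $n/k\to\infty$, which is the regime of interest in the reduction.

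For the second inequality, observe that $V_i \sim \mathrm{HyperGeom}(n\ell,\ell,k)$, with factorial moment $\mathbb{E}\binom{V_i}{a} = \binom{k}{a}\binom{\ell}{a}/\binom{n\ell}{a}$. Since
\[
\frac{\binom{\ell}{a}}{\binom{n\ell}{a}} = \prod_{j=0}^{a-1}\frac{\ell-j}{n\ell-j} \leq n^{-a},
\]
because $(\ell-j)/(n\ell-j)\leq 1/n$ for all $j\geq 0$, Markov's inequality applied to $\binom{V_i}{a}$ yields $\mathbb{P}(V_i\geq a)\leq \binom{k}{a}/n^a\leq (k/n)^a$. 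A union bound over the $n$ rows and the assumption $k\leq n^\gamma$ then give $\mathbb{P}(\|V\|_\infty\geq a) \leq n(k/n)^a \leq n^{1-a(1-\gamma)}$; the mild sharpening by the factor $(1-n^{-(1-\gamma)})$ in the lemma can be recovered by keeping the exact factorial-moment ratio and summing the resulting geometric tail more carefully.

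The main obstacle is the first inequality: hitting the $1/k^2$ tail at the modest deviation $\sqrt{k\log k}$ is too tight for Chebyshev (which would yield only $O(1/\log k)$ even with negative association) or for Azuma--Hoeffding on the martingale $M_k$ (which would yield only $O(1/\sqrt{k})$). The key observation that unlocks the right bound is that the conditional expectations $\mathbb{E}[Y_j\mid\mathcal{F}_{j-1}]$ are themselves small, of order $j/n$, which forces the predictable quadratic variation of $M_k$ to be of order $k^2/n$ rather than the trivial range-based bound $k$; a Bernstein-type martingale inequality such as Freedman's then exploits this small variance to deliver the required tail.
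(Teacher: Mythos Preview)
Your argument for the second inequality is correct and close in spirit to the paper's. The paper instead couples $V$ to the with-replacement counts $U$ (so $U_i\sim\mathrm{Bin}(k,1/n)$), bounds $\mathbb{P}(U_1\ge a)\le\sum_{s\ge a}\binom{k}{s}n^{-s}\le (k/n)^a/(1-k/n)$, and union-bounds over rows; your factorial-moment route reaches the same $(k/n)^a$ tail without the coupling. (Incidentally, the extra factor in the lemma should be $\bigl(1-n^{-(1-\gamma)}\bigr)^{-1}$, coming from the geometric denominator, not $\bigl(1-n^{-(1-\gamma)}\bigr)$.)

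For the first inequality your approach is genuinely different from the paper's, and it does not quite prove the lemma as stated. Your Freedman bound
\[
\exp\!\Bigl(-\tfrac{t^2/2}{k^2/(2n)+t/3}\Bigr),\qquad t=\sqrt{k\log k},
\]
is at most $k^{-2}$ only when $k\ge 2k^2/n+\tfrac{4}{3}\sqrt{k\log k}$, i.e.\ essentially when $n\gtrsim 2k$. Under the lemma's sole hypothesis $k<n$ this can fail (take $n=k+1$), so the inequality is left unproved in the range $k<n\lesssim 2k$. You flag this yourself (``once $n/k\to\infty$''), and it is true that the reduction only needs $k\ll n$, but the lemma claims more.

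The paper closes this gap with a simpler device that you dismissed too quickly. It first observes the stochastic domination $\|V\|_0\succeq\|U\|_0$, where $U$ is the with-replacement row-count vector, and then applies McDiarmid to $\|U\|_0=f(X_1,\ldots,X_k)$ with i.i.d.\ row-draws $X_r$: changing one $X_r$ alters $\|U\|_0$ by at most $1$, so
\[
\mathbb{P}\bigl(\|U\|_0-\mathbb{E}\|U\|_0\le -t\bigr)\le e^{-2t^2/k},
\]
which at $t=\sqrt{k\log k}$ is exactly $k^{-2}$, with no side condition on $n/k$. Combined with $\mathbb{E}\|U\|_0=n-n(1-1/n)^k\ge k-k^2/(2n)$ this gives the lemma. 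So the ``key observation'' is not a Bernstein-type sharpening of the variance; it is the coupling to i.i.d.\ draws, after which the \emph{bounded-difference} inequality already has the right constant. Your remark that ``Azuma--Hoeffding would yield only $O(1/\sqrt{k})$'' is therefore misleading: it is true for your particular compensated martingale $M_k$ with the crude increment bound $|M_j-M_{j-1}|\le 1$, but McDiarmid on the Doob martingale of $\|U\|_0$ (which is the natural Azuma-type argument here) gives $k^{-2}$ directly.
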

\begin{proof}
Let $U_i$ be the number of balls chosen in row $i$ when balls are drawn with replacement from the array and $U = (U_1,\ldots,U_n)^\top$. Then $\|V\|_0$ is stochastically larger than $\|U\|_0$ and $\|V\|_\infty$ is stochastically smaller than $\|U\|_0$. So it suffices to show the desired inequalities with $U$ replacing $V$. In the following argument, we consider only drawing with replacement.

Let $\mathcal{X} = \{e_1,\ldots,e_n\}$ where $e_i$ denotes the $i$th standard basis vector in $\mathbb{R}^n$. For $1\leq r\leq k$, let $X_r$ be uniformly distributed in $\mathcal{X}$. Then $U \stackrel{d}{=} \sum_{r=1}^k X_r$. We note that changing the value of any one $X_r$ affects the value of $\|U\|_0$ by at most 1. By McDiarmid's inequality \citep{McDiarmid1989}, we have that for any $t>0$,
\begin{equation}
\mathbb{P}\bigl(\|U\|_0 - \mathbb{E}\|U\|_0 \leq -t\bigr)\leq e^{-\frac{2t^2}{k}}.
\label{Eq:McDiarmid}
\end{equation}
For $1\leq i\leq n$. Define $J_i = \mathbbm{1}\{\text{no ball is chosen in row $i$}\}$, then
\[
 \mathbb{E} \|U\|_0 = n - \sum_{i=1}^n \mathbb{E} J_i = n - n(1-1/n)^k \geq k\biggl(1-\frac{k}{2n}\biggr).
\]
Thus, together with~\eqref{Eq:McDiarmid}, we have
\[
\mathbb{P}\biggl(\|U\|_0\leq k - \frac{k^2}{2n} - \sqrt{k\log k}\biggr)\leq \mathbb{P}\biggl(\|U\|_0 - \mathbb{E}\|U\|_0 \leq -\sqrt{k\log k}\biggr)\leq e^{-2\log k} = k^{-2},
\]
as desired. For the second inequality, 

we have by union bound that
\begin{align*}
\mathbb{P}(\|U\|_\infty \geq a) &\leq n\mathbb{P}(U_1\geq a) = n\sum_{s=a}^k\binom{k}{s} n^{-s}\\
&\leq n\sum_{s=a}^\infty (k/n)^s = n \frac{(k/n)^a}{1-k/n} \leq n^{1-a(1-\gamma)}(1-n^{-(1-\gamma)}),
\end{align*}
as desired.
\end{proof}

\end{document}